\newcommand\bcmdtab{\noindent\bgroup\tabcolsep=0pt%
  \begin{tabular}{@{}p{10pc}@{}p{20pc}@{}}}
\newcommand\ecmdtab{\end{tabular}\egroup}
\newcommand{\estep}{\mathit{endstep}}
\newcommand{\affected}{\mathit{affected}}
\newcommand{\move}{\mathit{move}}
\newcommand{\onNode}{\mathit{onNode}}
\newcommand{\step}{\mathit{step}}
\newcommand{\edge}{\mathit{edge}}
\newcommand{\edgeP}{\mathit{edgeP}}
\newcommand{\node}{\mathit{node}}
\newcommand{\inInduced}{\mathit{inInduced}}
\newcommand{\inClique}{\mathit{inClique}}
\newcommand{\QASP}{ASP(Q)\xspace}
  \title[Beyond NP: Quantifying over Answer Sets]
        {Beyond NP: Quantifying over Answer Sets}
  \author[G. Amendola, F. Ricca,  M. Truszczynski]
         {GIOVANNI AMENDOLA$^1$  \and FRANCESCO RICCA$^1$ \and MIREK TRUSZCZYNSKI$^2$\\
         $^1$University of Calabria, Rende, Italy\\
         \email{\{amendola,ricca\}@mat.unical.it}\\
         $^2$University of Kentucky, KY, USA\\
         \email{mirek@cs.uky.edu}}
\newtheorem{theorem}{Theorem}
\newtheorem{proposition}{Proposition}
\newtheorem{example}{Example}
\begin{document}

\label{firstpage}

\maketitle

\begin{abstract}
Answer Set Programming (ASP) is a logic programming paradigm featuring a 
purely declarative language with comparatively high modeling capabilities. 
Indeed, ASP can model problems in NP in a compact and elegant way. However, 
modeling problems beyond NP with ASP is known to be complicated, on the one 
hand, and limited to problems in $\Sigma^P_2$ on the other.
Inspired by the way Quantified Boolean Formulas extend SAT formulas to model 
problems beyond NP, we propose an extension of ASP that introduces quantifiers
over \emph{stable models} of programs. We name the new language ASP with Quantifiers
(\QASP).
In the paper we identify computational properties of \QASP; we highlight its 
modeling capabilities by reporting natural encodings of several complex 
problems with applications in artificial intelligence and number theory; and 
we compare \QASP with related languages. Arguably, \QASP allows one to model 
problems in the Polynomial Hierarchy in a direct way, providing
an elegant expansion of ASP beyond the class NP.
Under consideration for acceptance in TPLP.
\end{abstract}

  \begin{keywords}
    ASP, Quantified Logics, Polynomial Hierarchy
  \end{keywords}


%
\section{Introduction}
Answer Set Programming (ASP) 
\cite{DBLP:journals/cacm/BrewkaET11} 
is a logic programming paradigm for modeling and solving search and 
optimization problems. It is supported by a purely declarative formalism of 
logic programs with the semantics of stable models~\cite{DBLP:journals/ngc/GelfondL91} (also known as \emph{answer sets}~\cite{Lifschitz02}), and by several systems able to compute them
\cite{DBLP:conf/ijcai/GebserLMPRS18}. 
ASP was primarily aimed at problems whose decision versions are in the class 
NP. Indeed, ASP can model problems in NP in a compact and elegant way by means
of an intuitive and easy to follow methodology known as generate-define-test 
\cite{Lifschitz02} (also known as guess and check~\cite{EiterGandC2000}). 
Furthermore, implementations such as \emph{clasp}~\cite{DBLP:conf/lpnmr/GebserKK0S15}, and \emph{wasp}~\cite{DBLP:conf/lpnmr/AlvianoDLR15} 
have been shown to be effective in solving problems of 
practical interest on industrial-grade instances \cite{DBLP:journals/aim/ErdemGL16}.

Modeling problems beyond the class NP with ASP is possible to some extent.
Namely, when disjunctions are allowed in the heads of rules, every decision 
problem in the class $\Sigma_2^P$ can be modeled in a uniform way by a finite
program \cite{DBLP:journals/csur/DantsinEGV01}. However, modeling problems beyond NP with
ASP is complicated and the generate-define-test approach is no longer sufficient in general. 
Additional techniques such as \emph{saturation} \cite{DBLP:journals/amai/EiterG95} are needed but they are difficult to use, and may introduce constraints that have no direct relation to constraints of the problem being modeled.
As stated explicitly in~\cite{DBLP:journals/tplp/GebserKS11}
``unlike the ease of common ASP modeling, [...] these techniques are rather involved and hardly usable by ASP laymen.''

The primary goal of our work is to address the shortcomings of ASP in modeling
problems beyond NP. Building on the way Quantified Boolean formulas (QBFs)
extend SAT formulas to model problems from PSPACE, we propose a generalization 
of ASP that introduces quantifiers over stable models of programs. We name the 
new language \textit{ASP with Quantifiers} (\QASP) and refer to programs in that language as
\emph{quantified programs}. 

In the paper we formally introduce the language \QASP and its semantics. We
identify computational properties of \QASP. In particular, we show that every
problem in the Polynomial Hierarchy can be uniformly modeled by a quantified program. Moreover,
we show that no loss of expressivity results if we restrict programs defining 
quantifiers to be normal. An important consequence of that observation is that
when using \QASP to model problems, one can resort to the generate-define-test 
approach to specify these ``quantifying'' programs. This typically simplifies
modeling and verifying correctness. We illustrate these claims by presenting 
natural encodings of several complex problems with applications in artificial 
intelligence and mathematics. 

In the last part of the paper, we compare \QASP with alternative approaches for modeling problems beyond NP.
Earlier efforts in this direction include: the \emph{stable-unstable} formalism~\cite{DBLP:journals/tplp/BogaertsJT16},
various program transformations~\cite{DBLP:journals/tplp/EiterP06,DBLP:conf/lpnmr/Redl17a,DBLP:conf/birthday/FaberW11}, applications of meta-programming~\cite{DBLP:conf/lpnmr/Redl17a,DBLP:journals/tplp/GebserKS11} and more.%
\footnote{For example, weak constraints allow to model decision problems that are $\Delta_3^P$-complete~\cite{DBLP:journals/tkde/BuccafurriLR00}.}
In particular, we deepen the comparison with disjunctive programs and the stable-unstable formalism,
indicating key differences and their implications by means of additional modeling examples. 
We also extensively compare \QASP with the language of QBFs, which served as a direct inspiration for our work. 
%
A single sentence summary of our work is: \QASP allows one to model 
problems in the Polynomial Hierarchy in a direct way, providing
an elegant expansion of ASP beyond the class NP.
%

\section{Formal Framework}

We start by recalling syntax and semantics of \textit{Answer Set Programming} 
(ASP). We then introduce syntax and semantics of \textit{ASP with Quantifiers} 
(\QASP).

\subsection{Answer Set Programming}

Let $\mathcal{R}$ be a set of predicates, $C$ a set of constants, and $V$ a set of variables. A \textit{term} is a constant or a variable. An atom $a$ of arity $n\in\mathbb{N}$ is of the form $p(t_{1},..., t_{n})$, where $p$ is a predicate from $\mathcal{R}$ and $t_{1},..., t_{n}$ are terms. 
A \textit{disjunctive rule} $r$ is of the form
\begin{equation}
 a_{1}\vee\ldots\vee a_{l} \leftarrow b_{1},\ldots,b_{m}, \ not \
 c_{1},\ldots,\ not \ c_{n},
\label{eq:rule}
\end{equation}
\noindent where all $a_i$, $b_j$, and $c_k$ are atoms; $l,m,n\geq 0$ and $l+m+n >0$; $not$ represents
\textit{negation-as-failure}, also known as \textit{default negation}. 
The set $H(r)=\lbrace
a_{1},...,a_{l} \rbrace$ is the \textit{head} of $r$; the sets $B^{+}(r)=
\lbrace b_{1},...,b_{m} \rbrace$ and $B^{-}(r)=\lbrace c_{1},\ldots,c_{n} 
\rbrace$ are
the sets of the \textit{positive body} and the \textit{negative body} atoms of $r$, respectively. 
A rule $r$ is \textit{safe} if each of its variables occurs in some positive 
body atom. We restrict attention to programs built of safe rules only.
A rule $r$ is a \textit{fact}, if $B^{+}(r)\cup B^{-}(r)=\emptyset$ (we then omit
$\leftarrow$ from the notation);
a \textit{constraint}, if $H(r)=\emptyset$;
\textit{normal}, if $| H(r)| \leq 1$;
and \textit{positive}, if $B^{-}(r)=\emptyset$.
A \textit{(disjunctive logic) program} $P$ is a
finite set of disjunctive rules. $P$ is called \textit{normal}
[resp.\ \textit{positive}] if each $r\in P$ is normal [resp.\ positive]. 
\textcolor{black}{We define $At(P)=\bigcup_{r\in P} At(r)$,
that $At(P)$ is the set of all atoms occurring in the program $P$.}
A program $P$ is \textit{stratified} if there is a level mapping  $\|.\|_s$ of $P$ such that for every rule $r$ of $P$:
$(i)$ For any predicate $p$ occurring in $B^{+}(r)$, and for any $p'$ occurring in $H(r)$,  $\| p \|_s \leq \| p' \|_s$, and 
$(ii)$ For any predicate $p$ occurring in $B^{-}(r)$, and for any $p'$ occurring in $H(r)$,  $\| p \|_s < \| p' \|_s$.

The {\em Herbrand universe} of $P$, denoted by $U_{P}$, is the set of all 
constants appearing in $P$, except that when no constants appear in $P$, we
take $U_{P}=\{a\}$, where $a$ is an arbitrary constant.
The {\em Herbrand base} of $P$, denoted as $B_{P}$, is the set of all ground atoms that can be obtained
from the predicate symbols appearing in $P$ and the constants of $U_{P}$.
Given a rule $r$ occurring in a program $P$, a {\em ground instance} of  $r$ is a rule obtained from $r$ by replacing every variable $X$ in $r$ by $\sigma (X)$, where $\sigma$  is a substitution mapping the variables occurring in $r$ to
constants in $U_{P}$. 
The \textit{ground instantiation} of $P$, denoted by $ground(P)$, is the set of all the ground instances of the rules occurring in $P$. 
Any set $I\subseteq B_P$ is an \textit{interpretation}; it is a \textit{model} 
of a program $P$ (denoted $I\models P$) if for each rule $r\in ground(P)$, 
we have $I\cap H(r)\neq \emptyset$ whenever $B^{+}(r)\subseteq I$ and 
$B^{-}(r)\cap I=\emptyset$ (in such case, $I$ is a model of $r$, denoted $I
\models r$). A model $M$ of $P$ is \textit{minimal} if no model $M'\subset M$
of $P$ exists.
We denote by $MM(P)$ the set of all minimal models of $P$. 
\textcolor{black}{
For a program $P$ without constraints we write $P^I$ for the well-known 
\textit{Gelfond-Lifschitz reduct}~\cite{DBLP:journals/ngc/GelfondL91}
with respect to interpretation $I$, that is, the set of rules $H(r) \leftarrow
B^{+}(r)$, obtained from rules $r\in ground(P)$ such that $B^-(r) \cap I= \emptyset$. 
An answer set (or stable model) of a program $P$ without constraints is an interpretation $I$ such that $I\in MM(P^I)$.
For the general case, we write $P_{\leftarrow}$ for the set of constraints of a disjunctive logic program $P$.
We denote by $AS(P)$ the set of all {\em answer sets (or stable models)} of such programs $P$, that is, the set of all answer sets of $P\setminus P_{\leftarrow}$ that are models for $P_{\leftarrow}$.
}
We say that a program $P$ is \textit{coherent}, if it has at least one answer
set (that is, $AS(P) \neq \emptyset$), otherwise, $P$ is \textit{incoherent}.

\subsection{Answer Set Programming with Quantifiers}

%
An \textit{ASP with Quantifiers} (\QASP) program $\Pi$ is an expression of the form:
\textcolor{black}{
\begin{equation}
\Box_1 P_1\ \Box_2 P_2\ \cdots\ \Box_n P_n :  C ,
\label{eq:qasp}
\end{equation}
}
where, for each $i=1,\ldots,n$, $\Box_i \in \{ \exists^{st}, \forall^{st}\}$, $P_i$ is an ASP program, 
and $C$ is a stratified normal ASP program.%
\footnote{This condition is sufficient to model compactly constraints by exploiting the modeling advantages of inductive definitions.  $C$ is contemplated in the definition of \QASP just because it makes more natural the modeling of problems.}
Symbols $\exists^{st}$ and $\forall^{st}$ are named \textit{existential} and \textit{universal answer set quantifiers}, respectively.
\textcolor{black}{An 
\QASP program $\Pi$ of the form (\ref{eq:qasp}) is \emph{existential} 
(\emph{universal}, respectively) if $\Box_1 =\exists^{st}$ ($=\forall^{st}$, 
respectively).}
If for each $i=1,\ldots,n$ the ASP program $P_i$ is normal, then $\Pi$ is 
called a \textit{normal} \QASP program.
Given a logic program $P$ and an intepretation $I$ over $B_{P}$, and an \QASP 
program $\Pi$ the form~(\ref{eq:qasp}), we denote by $fix_P(I)$ the set of 
facts and constraints          
$\{ a \mid a\in I \} \cup \{ \leftarrow a \mid a\in B_P\setminus I \}$,
and by $\Pi_{P,I}$ the \QASP program of the form~(\ref{eq:qasp}), where $P_1$ is 
replaced by $P_1\cup fix_P(I)$, that is, $\Pi_{P,I}$ $=$ $\Box_1 (P_1\cup fix_P(I)) \cdots \Box_n P_n :  C$.
We now define \textit{coherence} of \QASP programs by induction on the number of 
quantifiers in the program.
\begin{itemize}
\item $\exists^{st} P:C$ is coherent, if there exists $M\in AS(P)$ such that 
$C\cup fix_P(M)$ is coherent;
\item $\forall^{st} P:C$ is coherent, if for every $M\in AS(P)$, 
$C\cup fix_P(M)$ is coherent;
\item $\exists^{st} P\ \Pi$ is coherent, if there exists $M\in AS(P)$ such 
that $\Pi_{P,M}$ is coherent;
\item $\forall^{st} P\ \Pi$ is coherent, if for every $M\in AS(P)$, $\Pi_{P,M}$
is coherent.
\end{itemize}
For instance, an \QASP program 
$\Pi=\exists^{st} P_1 \forall^{st} P_2 \cdots \exists^{st} P_{n-1} \forall^{st} P_n: C$ 
is coherent if there exists an answer set $M_1$ of $P_1'$ 
such that for each answer set $M_2$ of $P_2'$ 
there is an answer set $M_3$ of $P_3', \ldots,$ 
there is an answer set $M_{n-1}$ of $P_{n-1}'$ 
such that for each answer set $M_n$ of $P_n'$, 
there is an answer set of $C\cup fix_{P_n'}(M_n)$, 
where $P_1'=P_1$, and $P_i'=P_i\cup fix_{P_{i-1}'}(M_{i-1})$, if $i\geq2$. 

For an \QASP program $\Pi$ of the form~(\ref{eq:qasp}) such that $\Box_1=\exists^{st}$, we say that $M \in AS(P_1)$  is a \textit{quantified answer set} of $\Pi$, whenever $(\Box_2 P_2 \cdots \Box_n P_n :  C)_{P_1,M}$ is coherent, in case of $n>1$, and whenever $C\cup fix_{P_1}(M)$ is coherent, in case of $n=1$.
We denote by \textcolor{black}{$QAS(\Pi)$} the set of all quantified answer sets of $\Pi$.
Finally, note that the definition of quantified answer set can be naturally extended to programs with strong negation,  choice rules%
, aggregates
and other extensions~\cite{DBLP:journals/aim/GebserS16}. 
Thus, in the examples we resort also to these extensions that are part of the ASPCore standard input language~\cite{DBLP:conf/ijcai/GebserLMPRS18}. 

\begin{example}
Consider the \QASP program $\Pi =\exists^{st} P_1 \forall^{st} P_2:C$, where
$P_1=\{a(1) \vee a(2)\}$, $P_2=\{b(1) \vee b(2) \leftarrow a(1); \ b(2)
\leftarrow a(2)\}$, and $C=\{\leftarrow b(1),\ not \ b(2)\}$. 
The program $P_1$ has two answer sets $\{a(1)\}$ and $\{a(2)\}$.
Hence, to establish the coherence of $\Pi$, we have to check if at least one 
of $\{a(1)\}$ and  $\{a(2)\}$ is a quantified answer set of $\Pi$. 
Considering $\{a(1)\}$, we have $fix_{P_1}(\{a(1)\})= \{a(1); 
\leftarrow a(2)\}$. Under the notation used above, $P_2'= P_2\cup fix_{P_1}
(\{a(1)\})$. Thus, $AS(P_2\cup fix_{P_1}(\{a(1)\}))=\{ \{a(1),b(1)\},
\{a(1),b(2)\}\}$. For $M=\{a(1),b(1)\}$ we have $fix_{P_2'}(M)=
\{a(1);$ $b(1);\;\leftarrow a(2);\;\leftarrow b(2)\}$, and it is clear that
the program $C\cup fix_{P_2'}(M)$ is not coherent. Therefore, 
$\{a(1)\}$ is not a quantified answer set of $\Pi$. On the other hand, a 
similar analysis for the other answer set of $P_1$, $\{a(2)\}$, shows that
it is a quantified answer set of $\Pi$.
\end{example}

\QASP is a straightforward generalization of ASP in a sense made formal in 
the following theorem.

\begin{theorem}
Let $P$ be an ASP program, and let $\Pi$ be the \QASP program of the form~(\ref{eq:qasp}), where $n=1$,  $\Box_1=\exists^{st}$, $P_1=P$, and $C=\emptyset$.
Then, $AS(P)=QAS(\Pi)$.
\end{theorem}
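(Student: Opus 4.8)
The plan is to establish the two inclusions $QAS(\Pi)\subseteq AS(P)$ and $AS(P)\subseteq QAS(\Pi)$ separately, the first being immediate from the definitions and the second requiring only a short coherence check. Throughout I would keep in mind that here $n=1$, $\Box_1=\exists^{st}$, $P_1=P$ and $C=\emptyset$, so that the recursive clauses in the definition of coherence collapse to the base case.

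For $QAS(\Pi)\subseteq AS(P)$ I would just unwind the definition of \emph{quantified answer set}: for an \QASP program of the form~(\ref{eq:qasp}) with $\Box_1=\exists^{st}$, a quantified answer set is by definition an element $M$ of $AS(P_1)$ satisfying an additional coherence requirement. Since $P_1=P$, this gives $M\in AS(P)$ at once, with nothing further to prove.

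For the reverse inclusion I would fix $M\in AS(P)$ and show $M\in QAS(\Pi)$. Because $n=1$, this amounts to verifying that $C\cup fix_{P_1}(M)$ is coherent; and since $C=\emptyset$ and $P_1=P$, that program is exactly $fix_P(M)=\{a\mid a\in M\}\cup\{\leftarrow a\mid a\in B_P\setminus M\}$. The key claim is that $M$ is itself an answer set of $fix_P(M)$. To see this I would split off the constraints: $fix_P(M)\setminus fix_P(M)_{\leftarrow}$ is the set of facts $\{a\mid a\in M\}$, whose Gelfond--Lifschitz reduct with respect to $M$ is the same positive program and whose unique minimal model is $M$; hence $M$ is the unique answer set of that fact set. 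Moreover $M$ vacuously satisfies every constraint $\leftarrow a$ with $a\in B_P\setminus M$. Therefore $M\in AS(fix_P(M))$, so $fix_P(M)$ is coherent and $M\in QAS(\Pi)$, completing the proof.

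The step I expect to be the main (and in fact only) obstacle is not conceptual but bookkeeping: one must be careful with how the definition of $AS(\cdot)$ treats constraints, and must verify that the obviously-intended model $M$ is genuinely \emph{stable} for $fix_P(M)$ — i.e., check the reduct and minimality rather than merely classical satisfaction — since that is exactly the content of ``coherent'' in the $n=1$ clause. Once this is spelled out, both inclusions fall straight out of the definitions, which is precisely what one expects of a result asserting that \QASP conservatively extends ASP.
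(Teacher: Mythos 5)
Your proof is correct and follows essentially the same route as the paper's: both reduce the claim to the observation that, for $M\in AS(P)$, the program $\emptyset\cup fix_P(M)=fix_P(M)$ is coherent because $M$ is itself an answer set of it. The only difference is that the paper asserts this last fact as trivial, whereas you spell out the reduct/minimality check on the facts and the vacuous satisfaction of the constraints, which is a harmless (and arguably welcome) elaboration.
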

\begin{proof}
By definition, $M$ is a quantified answer set of $\Pi$ if and only if $M$ is an 
answer set of $P$ and $\emptyset\cup fix_P(M)=fix_P(M)$ is coherent. The latter
condition is trivially true as $M$ is an answer set of $fix_P(M)$.
\end{proof}


\section{Complexity issues}\label{sec:complexity}
We now study the computational properties of the \QASP language.
As it is customary in the literature we focus on the ground case, that is we assume that no variable occurs in programs. 

Because it is possible to alternate universal and existential answer set 
quantifiers, it is clear that \QASP can model probelms beyond NP.
In particular, each problem in PSPACE 
can be modeled by using an \QASP program. Formally, we define the {\sc Coherence} 
problem as follows: Given an \QASP program $\Pi$ as input, decide whether $\Pi$ is coherent.

\begin{theorem}\label{th:pspace}
The  {\sc Coherence} problem is PSPACE-complete, even under the restriction to
normal \QASP programs.
\end{theorem}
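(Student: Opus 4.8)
The plan is to prove both halves separately: {\sc Coherence} lies in PSPACE for arbitrary quantified programs, and it is PSPACE-hard already for normal ones, which together gives the stated result.

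For membership I would turn the inductive definition of coherence directly into a recursive polynomial-space procedure. On a quantified program $\Box_1 P_1\cdots\Box_n P_n:C$ the procedure enumerates, one at a time, all interpretations $M\subseteq B_{P_1}$, testing for each whether $M\in AS(P_1)$; this test runs in polynomial space even for disjunctive $P_1$, since one checks $M\models P_1^M$ and then verifies minimality by scanning the proper subsets of $M$ one at a time. For each $M$ that passes the test: if $n>1$ the procedure recurses on $\Box_2(P_2\cup fix_{P_1}(M))\ \Box_3 P_3\cdots\Box_n P_n:C$; if $n=1$ it performs a base-case coherence test for the ordinary logic program $C\cup fix_{P_1}(M)$, which is decidable in polynomial time because $C$ is stratified normal and we only add facts and constraints to it. If $\Box_1=\exists^{st}$ the procedure accepts iff some $M$ leads to acceptance; if $\Box_1=\forall^{st}$ it accepts iff every $M$ does. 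The recursion depth is $n$, each $fix_{P_1}(M)$ has polynomial size, and the modified programs stored along the call stack stay polynomially bounded, so the whole computation uses only polynomial space. (Equivalently, the procedure is an alternating polynomial-time computation, which also yields PSPACE.)

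For hardness I would reduce from the truth problem for quantified Boolean formulas, which is PSPACE-complete and remains so when the matrix is restricted to conjunctive normal form. Let $\Phi=Q_1x_1\,Q_2x_2\cdots Q_kx_k\,\phi$ with $\phi=c_1\wedge\cdots\wedge c_m$. For every variable $x_i$ introduce a fresh atom $\bar x_i$, set $\Box_i=\exists^{st}$ if $Q_i=\exists$ and $\Box_i=\forall^{st}$ if $Q_i=\forall$, and take $P_i=\{\,x_i\leftarrow not\ \bar x_i;\ \ \bar x_i\leftarrow not\ x_i\,\}$, a normal program with exactly the two answer sets $\{x_i\}$ and $\{\bar x_i\}$. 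Let $C$ consist of the positive rules $sat_j\leftarrow \ell$ for each literal occurrence $\ell$ in clause $c_j$ (a positive literal $x$ contributing the atom $x$, a negative literal $\neg x$ contributing $\bar x$), the rule $ok\leftarrow sat_1,\ldots,sat_m$, and the constraint $\leftarrow not\ ok$; assigning levels $0$ to the $x_i,\bar x_i$, $1$ to the $sat_j$, and $2$ to $ok$ witnesses that $C$ is stratified normal, as the definition of \QASP requires. The output $\Pi=\Box_1 P_1\cdots\Box_k P_k:C$ is a normal quantified program computable in polynomial time.

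Correctness is then proved by induction on $k$, following the unfolding of coherence. The key observation is that the facts and constraints produced by $fix$ at levels $<i$ only pin down the truth values chosen so far and do not interact with $P_i$, so the answer sets of $P_i'=P_i\cup fix_{P_{i-1}'}(M_{i-1})$ are exactly the two extensions of the running partial assignment by $x_i$ or by $\bar x_i$. Hence the recursive coherence condition for $\Pi$ becomes precisely ``$Q_1x_1\cdots Q_kx_k$: the program $C\cup fix_{P_k'}(M_k)$ is coherent'', and since $C\cup fix_{P_k'}(M_k)$ is a stratified normal program whose unique answer set contains $ok$ iff the assignment coded by $M_k$ satisfies every clause of $\phi$, that condition holds iff $\Phi$ is true; thus $\Pi$ is coherent iff $\Phi$ is true. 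The steps that need the most care are the space accounting in the membership argument (keeping the accumulated $fix$-sets polynomial and realizing the answer-set test in polynomial space for disjunctive $P_i$) and, on the hardness side, checking that threading $fix$ through the quantifier blocks neither merges nor spuriously removes any of the intended truth-value choices; neither is deep, but both require attention to the exact form of the $fix$ operator and of the coherence recursion.
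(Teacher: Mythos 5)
Your proposal is correct and follows essentially the same strategy as the paper: a recursive polynomial-space evaluation of the coherence recursion for membership, and for hardness the standard QBF reduction with $P_i=\{x_i\leftarrow not\ \bar x_i;\ \bar x_i\leftarrow not\ x_i\}$ and a stratified normal $C$ checking clause satisfaction. The only differences are cosmetic (the paper uses one constraint $\leftarrow not\ ok_j$ per clause rather than collecting them into a single $ok$ atom, and it cites rather than spells out the polynomial-space answer-set enumeration), so no further comment is needed.
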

\begin{proof} (Membership)
It is well known that answer sets of a disjunctive logic program can be 
enumerated in polynomial space in the size of the program. Let us assume  
that $p$ is a polynomial providing that bound. We prove that the 
coherence of an $\QASP$ program $\Pi$ of the form (\ref{eq:qasp}) can be 
decided in space $O(n\times p(s(\Pi)))$, where $s(\Pi)$ is the size of 
$\Pi$, and $n$ is the number of quantifiers in $\Pi$.

\textcolor{black}{To this end, we consider the following recursive algorithm. It consists of
enumerating all answer sets of $P_1$. If $n=1$, we have $\Pi=\Box P_1:C$. 
To decide coherence, for each enumerated answer set $M$ of $P_1$, we decide 
whether $C\cup fix_{P_1}(M)$ is coherent. Depending on whether $\Box=
\exists^{st}$ or $\forall^{st}$, if for some (every) answer set $M$ of 
${P_1}$, $C\cup fix_{P_1}(M)$ is coherent, we return that $\Pi$ is coherent. 
Otherwise, we return that $\Pi$ is not coherent. For $n\geq 2$, 
for each enumerated answer set $M$ of ${P_1}$, we recursively check whether 
$\Pi'=(\Box_2 P_2\ldots\Box_n P_n:C)_{P_1,M}$ is coherent, and decide about
coherence of $\Pi$ similarly as in the case $n=1$, depending on
the outermost quantifier.} 

\textcolor{black}{
By the comment above, we can enumerate all answer sets $M$ of $P_1$ in space 
$O(p(s(\Pi)))$ (indeed, $s(P_1)= O(s(\Pi))$). Moreover, if $n=1$, testing 
coherence of $C\cup fix_{P_1}(M)$ can be accomplished in time and so, also 
in space $O(s(C\cup fix(M)))=O(s(\Pi))$. Thus, if $n=1$, the algorithm requires
$O(p(s(\Pi)))$ space, establishing the base case of the induction. If 
$n\geq 2$, we need $O(p(s(\Pi)))$ space for enumerating answer sets and,
using the induction hypothesis, $O((n-1) \times p(s(\Pi)))$ space for each 
recursive call. Thus, the total space requirement is $O(n \times p(s(\Pi)))$,
completing the inductive step.}

\textcolor{black}{
We now observe that $n=O(s(\Pi))$, which shows that the algorithm we described
runs in space $O(s(\Pi)\times p(s(\Pi)))$. This implies the assertion.} 
\smallskip
\noindent

(Hardness) We give a reduction from the problem of deciding the
validity of a QBF formula $\Phi=Q_1 x_1 \ldots Q_n x_n \varphi$,
where for every $i=1,\ldots, n$, $Q_i\in \{\exists,\forall\}$ and $x_i$
is a propositional variable, and where $\varphi$ is a propositional formula
over $\{x_1,\ldots,x_n\}$. 
The problem is PSPACE-complete even when $\varphi$ is in 3-CNF. Thus, let us 
assume that $\varphi = C_1 \wedge \ldots \wedge C_m$, where $C_j = l_j^1 \vee 
l_j^2\vee l_j^3$ and $l_j^1, l_j^2,l_j^3 \in \{x_i,\neg x_i\mid i=1,\ldots,n\}$,
for each $j=1,\ldots,m$. We construct an \QASP program $\Pi$ as follows. For 
each $i=1,\ldots,n$, we define $P_i = \{ x_i \leftarrow not \ nx_i; \ nx_i 
\leftarrow not \ x_i \}$ and $\Box_i = Q_i^{st}$. We also define $C= \{ ok_j 
\leftarrow \sigma(l_j^h) \mid j=1,\ldots,m \mbox{ and } h=1,2,3 \} \cup 
\{\;\leftarrow not\ ok_i\mid i=1,\ldots, m\}$, 
where $\sigma(l) = x_i$ if $l=x_i$, and
$\sigma(l) = nx_i$ if $l = \neg x_i$. 
It is easy to see that $\Pi$ is coherent iff $\Phi$ is valid. Moreover,
as each program $P_i$ is normal, $\Pi$ is a normal \QASP program.
\end{proof}

As for QBFs, there is a direct correspondence between the 
number of alternating quantifiers and the level of the Polynomial Hierarchy (PH)
for which we have competeness of the coherence problem.

%

\begin{theorem}\label{th:norm}
The {\sc Coherence} problem is $(i)$ $\Sigma^P_n$-complete for normal 
\textcolor{black}{existential} \QASP programs with $n$ quantifiers in the 
prefix; and $(ii)$ $\Pi^P_n$-complete for normal \textcolor{black}{universal}
\QASP programs with $n$ quantifiers in the prefix.
\end{theorem}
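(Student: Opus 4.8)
The result splits into a membership part and a hardness part, for each of the two cases. I would prove both by induction on $n$, closely mirroring the PSPACE proof of Theorem~\ref{th:pspace} but tracking the quantifier alternation instead of collapsing everything to polynomial space. The key structural fact I rely on is that for a \emph{normal} program $P$, deciding whether a given interpretation $M$ is an answer set is in P, and deciding whether $P$ is coherent is in NP (and NP-complete in general). This is exactly the feature that lets the complexity follow the prefix of $\Box_i$'s rather than sitting at $\Sigma^P_2$ for each layer, as it would for disjunctive $P_i$.

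\textbf{Membership.} I would prove, by induction on the number $n$ of quantifiers, that coherence of a normal existential \QASP\ program with $n$ quantifiers is in $\Sigma^P_n$ (and dually for universal programs in $\Pi^P_n$). For the base case $n=1$: $\exists^{st} P_1 : C$ is coherent iff there exists $M \in AS(P_1)$ with $C \cup fix_{P_1}(M)$ coherent; guess $M$, verify in polynomial time that $M$ is an answer set of the normal program $P_1$ (here I use that the reduct of a normal program has a unique minimal model, computable in P), and then — since $C$ is a stratified normal program and $fix_{P_1}(M)$ just adds facts and constraints — $C \cup fix_{P_1}(M)$ has at most one candidate answer set, which can be computed and checked against the constraints in polynomial time. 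So the whole test is a single existential guess followed by a P-check, i.e. in NP $= \Sigma^P_1$. For the inductive step on an existential program $\exists^{st} P_1\, \Pi'$ with $\Pi'$ universal with $n-1$ quantifiers: guess $M \in AS(P_1)$ (P-checkable), then appeal to the induction hypothesis that coherence of $\Pi'_{P_1,M}$, a universal normal program with $n-1$ quantifiers, is in $\Pi^P_{n-1}$; an existential guess over a $\Pi^P_{n-1}$ oracle query gives $\Sigma^P_n$. The universal case is symmetric: a single $\forall$ over a $\Sigma^P_{n-1}$ predicate is $\Pi^P_n$.

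\textbf{Hardness.} I would reuse essentially verbatim the reduction given in the proof of Theorem~\ref{th:pspace} from QBF validity. Given $\Phi = Q_1 x_1 \cdots Q_n x_n\, \varphi$ with $\varphi$ in 3-CNF, the constructed program $\Pi$ has exactly $n$ quantifiers, each $P_i = \{x_i \leftarrow not\ nx_i;\ nx_i \leftarrow not\ x_i\}$ is normal, $C$ is a stratified normal program, and $\Box_i = Q_i^{st}$; moreover $\Pi$ is existential iff $Q_1 = \exists$ and universal iff $Q_1 = \forall$. Since deciding validity of a QBF with $n$ quantifier blocks starting with $\exists$ (resp.\ $\forall$) is $\Sigma^P_n$-complete (resp.\ $\Pi^P_n$-complete), and the reduction is polynomial, this gives the matching lower bounds for both cases. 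The only thing to double-check is that the correctness argument ``$\Pi$ coherent iff $\Phi$ valid'' from Theorem~\ref{th:pspace} respects the alternation level, which it does because the inductive unfolding of \QASP\ coherence exactly parallels the semantics of the quantifier prefix of $\Phi$.

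\textbf{Main obstacle.} The membership direction is where care is needed: I must be precise that for normal $P_i$ the per-layer work is genuinely in P (answer-set checking) rather than in coNP, since otherwise the alternation count would be off by a factor of two and one would only get containment in $\Sigma^P_{2n}$-type classes. This hinges on the uniqueness of the minimal model of the reduct of a normal program, and on the fact that adding $fix_{P_i}(M)$ (facts plus constraints) to the stratified normal program $C$ keeps it ``effectively deterministic'' so that its coherence is decidable in polynomial time. I would state this as a small lemma (or cite the standard complexity results for normal and stratified programs) and then the induction goes through cleanly. I expect the rest to be routine bookkeeping.
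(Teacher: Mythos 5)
Your membership argument is essentially the paper's: induction on $n$, using that answer-set checking for normal programs is polynomial and that coherence of the stratified normal program $C$ extended with facts and constraints is decidable in polynomial time, with the inductive step packaged as a guess plus an oracle call to the class one level down. That part is fine (the paper additionally notes that $\Box_2$ need not be the opposite quantifier, but since both $\Sigma^P_{n-1}$ and $\Pi^P_{n-1}$ oracles under an NP computation give $\Sigma^P_n$, your version goes through).

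The hardness part has a genuine gap. You reduce from QBF validity with the matrix $\varphi$ in 3-CNF for \emph{all} prefixes, and assert that this restricted problem is $\Sigma^P_n$-complete (resp.\ $\Pi^P_n$-complete) for every $n$. That is false for half the parities: when the \emph{innermost} quantifier block is universal, a 3-CNF matrix collapses. For instance, $\exists X_1 \forall X_2\, \bigwedge_j C_j$ is equivalent to $\bigwedge_j \forall X_2\, C_j$, and each $\forall X_2\, C_j$ reduces to the subclause of $C_j$ over $X_1$ (or to \emph{true} if $C_j$ contains complementary $X_2$-literals), so the whole problem is only NP-complete, not $\Sigma^P_2$-complete. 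Consequently your reduction does not establish $\Sigma^P_n$-hardness for even $n$ in the existential case, nor $\Pi^P_n$-hardness for odd $n$ in the universal case. The paper avoids this by invoking Stockmeyer's completeness results, which require the matrix to be 3-CNF when the innermost quantifier is existential and 3-DNF when it is universal, and accordingly supplies a second check program $C$ for the 3-DNF case, namely $C=\{ ok_j \leftarrow \sigma(l_j^1),\sigma(l_j^2),\sigma(l_j^3) \mid j=1,\dots,m\}\cup\{\leftarrow not\ ok_1,\dots,not\ ok_m\}$. You would need to add this case (and, as a minor point, group the variables into blocks $X_1,\dots,X_n$ so that the constructed program has exactly $n$ quantifiers rather than one per variable) for the lower bounds to hold in all four parity/leading-quantifier combinations.
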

\begin{proof}
(Membership) We proceed by induction on $n$. We start with $n=1$. If $\Pi = 
\exists^{st} {P_1}:C$ then deciding coherence amounts to checking whether there is
an answer set $I$ of ${P_1}$ such that $fix_{P_1}(I) \cup C$ is coherent. This problem
is in NP ($=\Sigma_1^P$) because one can check coherence of a normal 
stratified program with constraints in polynomial time~\cite{DBLP:journals/csur/DantsinEGV01}. 
If $\Pi = \forall^{st} {P_1}:C$ then deciding coherence amounts to checking whether 
there is no answer set $I$ of ${P_1}$ such that $fix_{P_1}(I) \cup C$ is not coherent.
This problem is in co-NP ($=\Pi_1^P$) because its complement, the problem to 
decide whether there is an answer set $I$ of ${P_1}$ such that $fix_{P_1}(I) \cup C$ 
is not coherent, is in NP (indeed, one can check coherence of a 
normal stratified program with constraints in polynomial time).

Next, let us assume that $n\geq 2$. Further, let $\Pi$ be a normal \QASP program
of the form~(\ref{eq:qasp}). If $\Box_1=\exists^{st}$, then to decide coherence
of $\Pi$ we have to decide whether there is an interpretation $I$ such that $I$
is an answer set of $P_1$ and $\Pi_{P_1,I}$ is coherent. Checking that $I$ is 
an answer set of $P$ is a polynomial-time task (we recall that $P_1$ is normal).
Checking that $\Pi_{P_1,I}$ is coherent can be accomplished with a call to an 
oracle for a problem in $\Sigma_{n-1}^P$ or in $\Pi_{n-1}^P$ depending on 
whether $\Box_2$ in $\Pi$ is $\exists^{st}$ or $\forall^{st}$. Indeed, by the induction 
hypothesis, the problem of deciding coherence for normal $\QASP$ programs with 
$n-1$ quantifiers and with the outermost quantifier fixed to $\exists^{st}$ 
($\forall^{st}$, respectively) is in $\Sigma_{n-1}^P$ ($\Pi_{n-1}^P$, 
respectively).

If $\Box_1=\forall^{st}$, to decide coherence of $\Pi$ we have to 
decide that for every answer set of $P_1$, $\Pi_{P_1,I}$ is coherent. The 
complement to this problem consists of deciding whether there is an 
an answer set $I$ of $P_1$ such that $\Pi_{P_1,I}$
is not coherent. By a similar argument as above, this problem is in 
$\Sigma_{n}^P$ (observe that an oracle deciding whether an $\QASP$ program
is coherent, can be used to decide whether an $\QASP$ program is not coherent).
It follows that deciding coherence for programs with $n$ quantifiers in the 
prefix and with $\forall^{st}$ as the outermost quantifier is in $\Pi_{n}^P$.

\smallskip
\noindent
(Hardness) Let us consider a QBF $\Phi =Q_1 X_1 \ldots Q_n X_n \varphi$, where 
$X_1,\ldots,X_n$ are disjoint sets of propositional variables, each $Q_i=
\exists$ or $\forall$, the quantifiers alternate, and $\varphi$ is a 3-CNF or 
3-DNF formula over the variables in $X_1\cup\ldots\cup X_n$. We encode $\Phi$
as an $\QASP$ program $\Pi_\Phi$ of the form (\ref{eq:qasp}) as follows. For 
every $i=1,\ldots, n$, we set \textcolor{black}{$\Box_i=Q_i^{st}$} and $P_i=\{x \leftarrow not \ nx
\mid x\in X_i\}\cup \{nx \leftarrow not \ x \mid x\in X\}$ (similarly as 
in the previous proof). If $\varphi$ is a 3-CNF formula, we define a normal 
stratified program with constraints $C$ as in the previous proof. So, assume 
$\varphi$ is a 3-DNF formula, say $\varphi = D_1 \vee \ldots \vee D_m$, where
$D_j = l_j^1 \wedge l_j^2\wedge l_j^3$ and $l_j^1, l_j^2,l_j^3 \in X_1\cup 
\ldots\cup X_n$, for each $j=1,\ldots,m$. In this case, we set $C= \{ ok_j 
\leftarrow \sigma(l_j^1), \sigma(l_j^2), \sigma(l_j^3) \mid j=1,\ldots,m\} 
\cup \{ \leftarrow not\ ok_1,\ldots, not\ ok_m\}$, where $\sigma(l) = x$
if $l=x$, and $\sigma(l) = nx$ if $l= \neg x$.   

It is easy to see that in both cases $\Phi$ is valid iff $\Pi_\Phi$ is coherent.
Moreover, both encodings can be obtained by a polynomial-time procedure. Now, 
according to well-known complexity results~\cite{DBLP:journals/tcs/Stockmeyer76} 
the problem to decide validity for  QBFs such that
(1) $Q_1=\exists$, $\varphi$ is in 3-DNF, and $n$ is even; 
(2) $Q_1=\exists$, $\varphi$ is in 3-CNF, and $n$ is odd; 
(3) $Q_1=\forall$, $\varphi$ is in 3-CNF, and $n$ is even; 
(4) $Q_1=\forall$, $\varphi$ is in 3-DNF and $n$ is odd 
is $\Sigma_n^P$-complete for the cases (1) and (2), and 
$\Pi_n^P$-complete for the cases (3) and (4). Thus, the hardness follows.
\end{proof}

We note that, for classes of disjunctive programs that can be translated in 
polynomial time to normal ones, such as Head-Cycle Free 
(HCF)~\cite{DBLP:journals/amai/Ben-EliyahuD96}, the correspondence between 
quantifier alternations and the level of the Polynomial Hierarchy is preserved.

\textcolor{black}{We also note that the theorem concerns, in each of the 
two cases, the corresponding class of \emph{all} \QASP programs with $n$ 
quantifiers. In particular, the membership part is proved for that class.
The proof of hardness explicitly usues special programs in that class, the 
ones in which quantifiers \emph{alternate}.}

\section{Modeling in \QASP}\label{sec:modeling}
In this section, we focus on the modeling capabilities of our language.
Thus, we study some well-known problems that are computationally beyond NP,
and show how to solve them in \QASP.
 
\subsection{Minmax Clique}
\textit{Minmax problems} play a key role in  various  fields of research, 
including game theory, combinatorial optimization and computational 
complexity~\cite{Cao1995}.
A minimax problem can be formulated as $min_{x\in X}  max_{y\in Y}  f(x,y)$,
where  $f(x,  y)$  is  a  function  defined  on  the  product set  of  $X$ 
and  $Y$. Here, we focus on the so-called \textit{Minmax Clique} 
problem~\cite{Ko1995}, but our approach can be easily adapted to model 
other minmax problems.

Let $G=\langle N,E\rangle$ be a graph, $I$ and $J$ two finite sets of indices,
and $(A_{i,j})_{i\in I,j\in J}$ a partition of $N$. We write $J^I$ for the
set of all total functions from $I$ to $J$.  
For every total function $f\colon I\rightarrow J$ we denote by $G_f$ the 
subgraph of $G$ induced by $\bigcup_{i\in I} A_{i,f(i)}$. We define the 
{\sc Minmax Clique} problem as follows: Given a graph $G$, sets of indices 
$I$ and $J$, a partition $(A_{i,j})_{i\in I,j\in J}$
(all as above), and 
an integer $k$, decide whether 
\[
\min_{f\in J^I}\;\max\{|Q|: \mbox{$Q$ is a clique of $G_f$}\}\geq k.
\]
%
%
%
It is known that this problem is $\Pi_2^p$-complete~\cite{Ko1995}.

Consider the following \QASP program $\Pi = \forall^{st} P_1 \exists^{st} P_2: C$. 
The ASP program $P_1$ is given by:
\begin{center}
$P_1=\left\{
\begin{array}{rcll}
\edge(a,b) && &\forall (a,b)\in E\\
\node(a) && &\forall a\in N\\
v(i,j,a) && & \forall i\in I, \ j\in J, \ a\in A_{i,j} \\
setI(X) &\leftarrow& v(X,\_,\_)\\
setJ(X) &\leftarrow& v(\_,X,\_)\\
1 \{ f(X,Y) :  setJ(Y) \} 1 &\leftarrow& setI(X)
\end{array}\right\}$
\end{center}
Informally, the role of $P_1$ is to specify the input graph, the sets
$I$ and $J$ of indices, a partition $(A_{i,j})$, and the search space of all total
functions from $I$ to $J$.  
Specifically, the first two sets of facts encode the graph by using two 
predicates: a binary one named $\edge$, collecting all edges of the graph; 
and a unary one named $\node$ collecting all nodes of the graph. Then, the 
third set of facts encodes the  partition $(A_{i,j})$ by using a ternary 
predicate $v$. Projections applied to $v$ (rules four and five) define elements 
of the sets $I$ and $J$, respectively. Finally, the last rule defines the 
space of all total functions $f$ from $I$ to $J$.
The ASP program $P_2$ is defined as follows:
\begin{center}
$P_2=\left\{
\begin{array}{rcl}
\inInduced(Z) &\leftarrow& v(X,Y,Z), \ f(X,Y)\\
\edgeP(X,Y) &\leftarrow& \edge(X,Y), \ \inInduced(X), \ \inInduced(Y)\\
\{ \inClique(X)\, : \, \inInduced(X) \} &&\\
&\leftarrow& \inClique(X), \ \inClique(Y), \ not \ \edgeP(X,Y)
\end{array}\right\}$
\end{center}
Its role is to define the subgraph $G_f$ of $G$ determined by a total 
function $f$, and to select a clique in this subgraph. In particular, the 
first rule defines the set of nodes of the subgraph $G_f$ (whenever a node
$Z$ belongs to the set $A_{X,Y}$, and the function $f$ maps $X$ to $Y$, then 
$Z$ is a node of $G_f$). The second rule ensures that whenever there is an 
edge from $X$ to $Y$, and both $X$ and $Y$ are nodes of $G_f$, then the edge 
$(X,Y)$ is an edge of $G_f$ ($G_f$ is the \emph{induced} subgraph). The third 
rule allows to select nodes of the partition as candidates for a clique. 
The final constraint requires that it is not possible that two nodes $X$ and 
$Y$ are in a clique and there is no edge in the subgraph $G_f$ from $X$ to $Y$.
Finally, the program $C$ is defined as follows. 

\begin{center}
$C=\left\{
\begin{array}{cl}
\leftarrow& \#\mbox{count}\{X:\inClique(X)\}<k
\end{array}\right\}$
\end{center}
The constraint forces the number of nodes in a clique to be greater or equal to $k$.

Intuitively, we check if for each answer set of $P_1$, that is for each total function $f$ from $I$ to $J$, there exists an answer set of $P_2$, that is a clique in the subgraph of $G$ induced by $f$, such that its cardinality is not less than $k$. 
If so, a quantified answer set of $\Pi$ exists. 

\begin{theorem}
Let $\mathcal{I}=\langle G, (A_{i,j})_{i\in I,j\in J}, k\rangle$ be an instance
of the {\sc Minmax Clique} problem. Then, 
\[
\min_{f\in J^I}\;\max\{|Q|: \mbox{$Q$ is a clique of $G_f$}\}\geq k
\]
if and only if the \QASP program $\Pi$, defined as above, has a quantified 
answer set.
\end{theorem}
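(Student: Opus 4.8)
The plan is to establish a tight correspondence between the quantifier structure of the \QASP program $\Pi = \forall^{st} P_1\, \exists^{st} P_2 : C$ and the $\min$/$\max$ structure of the problem, by unwinding the inductive definition of coherence. Since $\Box_1 = \forall^{st}$, the program $\Pi$ has a quantified answer set if and only if $\Pi$ is coherent, which by definition holds if and only if for every $M_1 \in AS(P_1)$ the program $(\exists^{st} P_2 : C)_{P_1,M_1}$ is coherent, i.e.\ for every $M_1 \in AS(P_1)$ there exists $M_2 \in AS(P_2 \cup fix_{P_1}(M_1))$ such that $C \cup fix_{P_2'}(M_2)$ is coherent, where $P_2' = P_2 \cup fix_{P_1}(M_1)$. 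So the proof reduces to three claims, each identifying a layer of the program with a layer of the combinatorial statement.

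First, I would show that the answer sets $M_1$ of $P_1$ are in bijection with the total functions $f \in J^I$. The facts fix the predicates $\edge$, $\node$, and $v$; the projection rules deterministically derive $setI$ and $setJ$ as the domain $I$ and codomain $J$; and the choice rule $1\{f(X,Y) : setJ(Y)\}1 \leftarrow setI(X)$ forces, in every answer set, exactly one $f(i,j)$ atom per $i \in I$ with $j \in J$ — this is the standard semantics of cardinality-constrained choice rules. Hence $M_1 \leftrightarrow f$, where $f(i) = j$ iff $f(i,j) \in M_1$. Second, fixing such an $M_1$ (equivalently $f$), I would show that the answer sets $M_2$ of $P_2 \cup fix_{P_1}(M_1)$ correspond to pairs $(G_f, Q)$ where $Q$ is a clique of $G_f$: the first rule of $P_2$ derives exactly the node set $\bigcup_i A_{i,f(i)}$ of $G_f$ via $\inInduced$; the second derives exactly the induced edge set via $\edgeP$; the choice rule $\{\inClique(X) : \inInduced(X)\}$ picks an arbitrary subset $S$ of the nodes of $G_f$; and the constraint $\leftarrow \inClique(X),\inClique(Y),not\ \edgeP(X,Y)$ rejects $S$ unless every pair of distinct selected nodes is joined by an edge of $G_f$ — so the surviving $M_2$ are exactly those whose $\inClique$-reduct is a clique $Q$ of $G_f$. (I should be mildly careful about the loop/self-edge convention, but since $\edge$ has no self-loops and cliques are sets, the constraint only bites on distinct $X,Y$.) Third, for such an $M_2$, the program $C \cup fix_{P_2'}(M_2)$ consists of the single constraint $\leftarrow \#\text{count}\{X : \inClique(X)\} < k$ together with facts fixing $\inClique$ to $Q$, so it is coherent iff $|Q| \ge k$.

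Chaining these three equivalences: $\Pi$ has a quantified answer set $\iff$ for every $f \in J^I$ there is a clique $Q$ of $G_f$ with $|Q| \ge k$ $\iff$ for every $f \in J^I$, $\max\{|Q| : Q \text{ clique of } G_f\} \ge k$ $\iff$ $\min_{f \in J^I}\max\{|Q| : Q \text{ clique of } G_f\} \ge k$, which is exactly the condition defining the {\sc Minmax Clique} instance $\mathcal{I}$. This completes the argument.

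The main obstacle I anticipate is the bookkeeping in the second claim: one must verify that the choice rules and the positive rules defining $\inInduced$ and $\edgeP$ interact correctly under the stable-model semantics when combined with $fix_{P_1}(M_1)$ — in particular that $\inInduced$ and $\edgeP$ are forced to take exactly their intended extensions in every answer set (they are derived by a stratified, in fact positive, part of the program given the fixed input, so minimality pins them down), and that no spurious answer sets arise from the choice rule over $\inClique$. This is routine but is where a careless argument could go wrong; the cleanest way to handle it is to note that, once the input predicates are fixed by facts, the rules for $\inInduced$ and $\edgeP$ form a stratified program with a unique least model on top of which the single choice rule ranges freely, and then appeal to the splitting theorem for stable models.
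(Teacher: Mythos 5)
Your proposal is correct in structure and follows exactly the layered correspondence the paper intends (answer sets of $P_1$ $\leftrightarrow$ functions $f\in J^I$; answer sets of $P_2\cup fix_{P_1}(M_1)$ $\leftrightarrow$ cliques of $G_f$; coherence of $C$ $\leftrightarrow$ $|Q|\geq k$), including the correct resolution of the fact that ``quantified answer set'' is only defined for existential programs, so for $\Box_1=\forall^{st}$ one must read the statement as coherence of $\Pi$.

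One concrete point needs repair: your parenthetical claiming the clique constraint ``only bites on distinct $X,Y$'' is backwards. Under standard grounding, the instance with $X=Y=a$ is present, and precisely \emph{because} $\edge$ has no self-loops, $\edgeP(a,a)$ is never derivable, so the body $\inClique(a),\inClique(a), not\ \edgeP(a,a)$ is satisfied whenever $\inClique(a)$ holds --- which would eliminate every answer set with a nonempty clique. A rigorous proof must either add $X\neq Y$ to the constraint (the standard fix, clearly the intended reading of the encoding) or assume $\edgeP$ is reflexive on $\inInduced$; with that adjustment your second claim, and hence the whole chain of equivalences, goes through. The rest of your argument (splitting off the stratified $\inInduced$/$\edgeP$ layer, letting the single choice rule range freely over it, and evaluating the $\#\mathrm{count}$ constraint on the fixed extension of $\inClique$) is exactly the right way to make the correspondence precise.
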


\subsection{Pebbling Number}

Graph pebbling is a well-known mathematical game~\cite{Hurlbert:739273}. It
was first suggested as a tool for solving a particular problem in number 
theory~\cite{Chung:1989:PH:75533.75537}. The game consists of a graph with 
pebbles placed on (some of) its nodes. The goal is to place a pebble on a 
\emph{target} node by performing a sequence of \emph{pebbling} moves.
More formally, let $G=\langle N,E\rangle$ be a directed graph whose nodes 
may contain pebbles. A \textit{pebbling move} along an edge $(a,b)\in E$ 
requires that node $a$ contains at least two pebbles; the move removes 
two pebbles from $a$ and adds one pebble to $b$. The \textit{pebbling number}, 
denoted by $\pi(G)$, is the smallest number of pebbles such that for
every assignment of $k$ pebbles to nodes of $G$ and for every node $w\in N$
(the target), some sequence (possibly empty) of pebbling moves results in a 
pebble on $w$. The {\sc Pebbling number} problem asks whether $\pi(G)$ is 
less than or equal to $k$. This problem is $\Pi_2^p$-complete, and it remains 
so also when the target node is part of the 
input~\cite{Milans:2006:CGP:1146764.1237635}.
(For the latter version,we redefine $\pi(G)$ accordingly.)

To capture the definition of the {\sc Pebbling number} problem we construct
an \QASP program $\Pi = \forall^{st} P_1 \exists^{st} P_2:C$. Its program $P_1$ 
is defined as follows:
\begin{center}
$P_1=\left\{
\begin{array}{rcll}
edge(a,b) && \forall (a,b)\in E\\
node(a) && \forall a\in N\\
pebble(i) &&  \forall i=0,1,\ldots,k\\
1 \{ onNode(X,N) :  pebble(N) \} 1 &\leftarrow& node(X)\\ 
&\leftarrow& \#\mbox{sum}\{N,X : onNode(X,N)\}\neq k&\\
1 \{ target(X) :  node(X) \} 1 && 
\end{array}\right\}$
\end{center}
The first two sets of facts encode the input graph, and the third one the 
set of integers that can serve as the number of pebbles a node can have.
The first rule  of the program (line 4) selects, for each node $X$, the 
number $N$ of pebbles on $X$. The second rule (line 5) ensures the total 
number of pebbles on all 
nodes of $G$ is $k$. The last rule selects exactly one node as the target
allowing any node to be selected. Thus, answer sets of $P_1$ capture all 
possible ``input configurations'' for $G$, each configuration defined by 
a distribution of $k$ pebbles among nodes of $G$ and the target node.

The ASP program $P_2$ in $\Pi$ is defined as follows:
%
%
\begin{center}
$P_2=\left\{
\begin{array}{rcl}
\step(i) && \forall i=0,1,\ldots,k-1\\
1 \{ \estep(S): \step(S)\} 1\\
\onNode(X,N,0) &\leftarrow& \onNode(X,N)\\
1\{ \move(X,Y,S) : \edge(X,Y) \}1 &\leftarrow& \step(S), \estep(T), 1\leq S,\ S \leq T\\
&\leftarrow& \move(X,Y,S),\ \onNode(X,N,S),\ N<2 \\ 
\affected(X,S) &\leftarrow& \move(X,Y,S)\\ 
\affected(Y,S) &\leftarrow& \move(X,Y,S)\\ 
\onNode(X,N-2,S) &\leftarrow& \onNode(X,N,S-1), \move(X,Y,S)\\
\onNode(Y,M+1,S) &\leftarrow& \onNode(Y,M,S-1), \move(X,Y,S)\\
\onNode(X,N,S) &\leftarrow& \onNode(X,N,S-1), \textit{not}\ \affected(X,S)\\
\end{array}\right\}$
\end{center}
The first set of facts (line 1) encodes all integers $i$ that can serve as
the number of pebbling moves. Since each pebbling move removes one pebble, 
any successful sequence of pebbling moves has length at most $k-1$. 
Consequently, we may (and do) restrict these integers to $0,1,\ldots,k-1$. 
The first rule of $P_2$ (line 2) selects a single integer to represent the 
number of pebbling moves. The second rule of $P_2$ (the next line) defines 
the initial state of the graph (before any pebbling moves). It is given by 
the initial distribution of pebbles obtained from an answer set of the 
program $P_1$ (we overload the notation here; the predicate $\onNode$ 
defining the intial configuration in $P_1$ is binary, while the predicate
$\onNode$ defined in $P_2$ is ternary; it has an additional argument to
represent the step). The third rule selects an edge for the pebbling move step 
$S=1,2,\ldots,T$, where $T$ is the end step (defined via $\estep$). \textcolor{black}{The 
constraint that follows imposes the pebbling move precondition:} there must 
be at least two pebbles on the node where the pebbling move originates. The 
next two rules define the two nodes affected by
the move. The last three rules define the state of the graph after the 
pebbling move in step $S$ (applied to the graph after $S-1$ pebbling moves). 
The first two of these three rules describe how the number of pebbles change 
on the nodes that are involved in the move. The last rule is the inertia rule 
that keeps the number of pebbles unchanged on all nodes unaffected by the move. 
Informally, answer sets of $P_2$ correspond to all valid sequences of 
pebbling moves that do not eliminate all pebbles and start in the initial 
state of the graph, together with the corresponding sequence of states of 
the graph.

Finally, the program $C$ in $\Pi$ is defined as follows. 
\begin{center}
$C=\left\{
\begin{array}{rcl}
ok(W) &\leftarrow& onNode(W,N,S), \ target(W), \estep(T)\ N>0\\
&\leftarrow& target(W),\ not \ ok(W)
\end{array}\right\}$
\end{center}
First rule defines $ok(W)$ to hold whenever $W$ is a target node 
and there is a pebble on it after the last pebbling move $T$. The 
constraint ensures no answer set if $ok(W)$ has not been inferred.

Intuitively then, $\Pi$ is coherent precisely when for each assignment of $k$ 
pebbles to nodes of a given graph and for every choice of a target node (that
is, for every answer set $M_1$ of $P_1$) there is a sequence of pebbling 
moves of length at most $k-1$ (that is, there is an answer set $M_2$ for 
\textcolor{black}{%
$P_2 \cup fix_{P_1}(M_1)=P_1'$) such that the target node has a pebble on it (that is,
$C\cup fix_{P_1'}(M_2)$ has an answer set).
}

\begin{theorem}
Let $\mathcal{I}=\langle G, k\rangle$ be an instance of the Pebbling Number Problem.
Then,  $\pi(G)\leq k$ if and only if the \QASP program 
$\Pi$, defined as above, is coherent. 
\end{theorem}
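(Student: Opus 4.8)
The plan is to establish the two directions of the biconditional by exhibiting a precise correspondence between the mathematical objects in the definition of $\pi(G)$ and the answer sets that arise in evaluating the coherence of $\Pi=\forall^{st} P_1 \exists^{st} P_2: C$. The key observation is that the evaluation unfolds exactly along the quantifier structure: $\Pi$ is coherent iff for every $M_1\in AS(P_1)$ there is an $M_2\in AS(P_1')$ with $P_1'=P_2\cup fix_{P_1}(M_1)$ such that $C\cup fix_{P_1'}(M_2)$ is coherent. So the proof reduces to three lemmas about each layer: (1) the answer sets $M_1$ of $P_1$ are in bijection with pairs $(D,w)$, where $D\colon N\to\{0,\dots,k\}$ is a distribution of exactly $k$ pebbles and $w\in N$ is a target node; (2) given $M_1$ encoding $(D,w)$, the answer sets $M_2$ of $P_1'$ are in bijection with valid pebbling sequences of length $T$ for some $0\le T\le k-1$ starting from $D$, each paired with the induced sequence of configurations $\onNode(\cdot,\cdot,s)$ for $0\le s\le T$; and (3) $C\cup fix_{P_1'}(M_2)$ is coherent iff the final configuration of that pebbling sequence places at least one pebble on $w$.

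For step (1), I would argue that $P_1$ is essentially a generate-and-test program: the choice rule $1\{onNode(X,N):pebble(N)\}1 \leftarrow node(X)$ forces exactly one pebble count per node, the $\#\mathrm{sum}$ constraint prunes all distributions whose total is not $k$, and $1\{target(X):node(X)\}1$ picks one target; since the rest of $P_1$ is a set of facts plus stratified rules defining $setI$-style auxiliary predicates (here $edge$, $node$, $pebble$), standard splitting/supportedness arguments give the bijection. For step (2), the crucial point is that $fix_{P_1}(M_1)$ pins down $edge$, $node$, $target$, and the binary $onNode$ facts, so in $P_1'$ the ternary $onNode(X,N,0)$ atoms are forced to mirror $D$; then the choice of $\estep(T)$ together with the move-selection rule generates one edge per step $1,\dots,T$, the precondition constraint $\leftarrow move(X,Y,S),onNode(X,N,S),N<2$ enforces that moves are legal, and the update rules plus the inertia rule deterministically propagate the configuration. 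I expect this to be the main obstacle: one must verify that the configuration predicates $onNode(\cdot,\cdot,s)$ are functional at every step (exactly one count per node) and that they are correctly and minimally supported — in particular that a step cannot "skip" a node, that the $N-2$ and $M+1$ arithmetic never goes out of the declared pebble range on a legal run, and that no spurious answer set arises from an ill-founded loop through the recursive $onNode$ rules. Here one uses that a pebbling move strictly decreases the total pebble count, so any sequence has length at most $k-1$, matching the declared range of $step$, and that $P_1'$ modulo the fixed facts is stratified by step, which guarantees a unique supported model per choice of $(T,\text{moves})$.

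For step (3), $C$ together with $fix_{P_1'}(M_2)$ fixes the entire configuration history and the target $w$; its rule derives $ok(w)$ iff some $onNode(w,N,S)$ with $N>0$ holds, and since $S$ ranges over all recorded steps and $onNode$ is monotone-preserved by inertia once a pebble sits on $w$, this is equivalent to $w$ carrying a pebble in the final configuration indexed by $\estep(T)$ (note the first rule of $C$ has $\estep(T)$ in its body, so effectively only the last step matters); the constraint $\leftarrow target(W),not\ ok(W)$ then makes $C\cup fix_{P_1'}(M_2)$ coherent iff the final configuration places a pebble on $w$. Assembling the three lemmas: $\Pi$ is coherent iff for every distribution $D$ of $k$ pebbles and every target $w$ there exists a legal pebbling sequence ending with a pebble on $w$, which is exactly the statement $\pi(G)\le k$ by the definition of the pebbling number. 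The one subtlety to flag in writing this up is the "possibly empty" clause in the definition of $\pi(G)$: when $D$ already has a pebble on $w$, the empty sequence works, and this is captured by allowing $\estep(T)$ with $T=0$, so I would make sure step (2)'s bijection explicitly includes the zero-length sequence.
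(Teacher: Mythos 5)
Your proposal is correct and follows essentially the same layered correspondence (answer sets of $P_1$ $\leftrightarrow$ pebble distributions plus target, answer sets of $P_2'$ $\leftrightarrow$ legal move sequences of length at most $k-1$, coherence of $C\cup fix_{P_2'}(M_2)$ $\leftrightarrow$ a pebble reaching the target) that the paper itself only sketches informally, so there is nothing to contrast. One small slip worth fixing in a write-up: the first rule of $C$ does not bind $S$ to $T$, so $ok(w)$ fires if $w$ holds a pebble at \emph{any} recorded step, and a pebble on $w$ is not ``monotone-preserved'' (a later move may originate at $w$); this is harmless because any intermediate witness can be truncated to a shorter sequence whose final configuration pebbles $w$, but the equivalence should be stated that way rather than via inertia.
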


\subsection{Vapnik-Chervonenkis Dimension}

The \emph{Vapnik-Chervonenkis dimension} (VC dimension) is a fundamental 
concept in machine learning theory~\cite{Vapnik2015}. The VC dimension is 
a measure of the capacity of a space of functions that can be learned by 
a statistical classification algorithm~\cite{DBLP:journals/jacm/BlumerEHW89}. 
In particular, it is the cardinality of the largest set of points that the 
algorithm can shatter.
In statistical learning theory, the VC dimension can predict probabilistic 
upper bounds on the test error of a classification 
model~\cite{DBLP:books/daglib/0097035}. Further applications include finite 
automata, complexity theory, computability theory, and computational 
geometry. 

Here, we focus on the so-called \emph{discrete} VC dimension problem, where the 
considered universe is finite. The problem concerns families of 
subsets that are represented by Boolean circuits. However, we assume that 
the representation is given by a logic program capturing the corresponding 
formula. Specifically, we assume that a program $P_\mathcal{C}$ representing 
a family $\mathcal{C}$ of subsets of $U$ contains a unary predicate $true$, 
and that extensions of the predicate $true$ in answer sets of $P_\mathcal{C}$ 
are precisely the elements of $\mathcal{C}$. Constructing a program 
$P_\mathcal{C}$ from a Boolean circuit representing $\mathcal{C}$ is a matter 
of routine and can be accomplished in linear time.
Let $k$ be an integer, $U$
a finite set, and $\mathcal{C}=\{S_1,\ldots,S_n\} \subseteq 2^U$ a collection 
of subsets of $U$ represented by a program $P_\mathcal{C}$.  
The {\sc VC Dimension} problem asks whether there is a subset $X$ of $U$ of 
size at least $k$, such that for each subset $S$ of $X$, there exists $S_i$ 
such that $S=S_i\cap X$. The VC dimension of $\mathcal{C}$ is defined as
maximum size of such a set $X$ and is denoted by $VC(\mathcal{C})$. Hence, 
the {\sc VC Dimension} problem asks whether $VC(\mathcal{C})\geq k$. It is 
known that this problem (assuming a circuit or a program representation of 
$\mathcal{C}$) is 
$\Sigma_3^p$-complete~\cite{DBLP:journals/jcss/Schaefer99}.
We will show that the problem can be described by an \QASP program 
$\Pi = \exists^{st}P_1 \forall^{st} P_2 \exists^{st} P_3:C$. 
The ASP program $P_1$ is defined as follows:
\begin{center}
$P_1=\left\{
\begin{array}{rcll}
inU(x) && \forall x\in U\\
k \{ inX(X) :  inU(X) \} && 
\end{array}\right\}$
\end{center}
The set of facts in line 1 encodes the elements of the set $U$, while the 
choice rule in line 2 selects a subset $X$ of $U$ with at least $k$ elements.
It is clear that answer sets of $P_1$ are all subsetes of $U$ with at least 
$k$ elements. 

The ASP program $P_2$ consists of a single choice rule:
\begin{center}
$P_2=\left\{
\begin{array}{c}
 \{ inS(X) :  inX(X) \}  
\end{array}\right\}$
\end{center}
Thus, answer sets of $P_2$ are subsets of a set $X$ (determined by a selected
answer set of $P_1$).

For $P_3$ we simply take $P_{\mathcal{C}}$. Wlog, we may assume that
$P_\mathcal{C}$ shares no vocabulary elements with $P_1$ and $P_2$. 
Thus, for every possible ``input'' from $P_1$ 
and $P_2$, answer sets of $P_3'$, that is, $P_3$ extended with the input 
from $P_1$ and $P_2$, determine elements of $\mathcal{C}$ via extensions 
of the predicate $true$.

 
Finally, the program $C$ is defined as follows (understanding $true$
as defined above):
\begin{center}
$C=\left\{
\begin{array}{rcl}
 inIntersection(X) &\leftarrow& true(X), \ inX(X)\\ 
 &\leftarrow& inIntersection(X),\  not \ inS(X)\\ 
 &\leftarrow& not \ inIntersection(X),\  inS(X)\\ 
\end{array}\right\}$
\end{center}
The first rule collects into predicate $inIntersection$, 
the intersection of the selected set $S_i$ from $\mathcal{C}$ (represented 
by an answer set of $P'_3$ by means of the predicate $true$) and $X$,
a subset of $U$ selected via an answer set of $P_1$. The two constraints
force this intersection to coincide with the subset $S$ of $X$ (an answer 
set of $P_2$ extended with a selected answer set of $P_1$ as input 
representing $X$). 

Intuitively, the program $\Pi$ is coherent when there 
exists an answer set $M_1$ of $P_1$ 
(that is, a subset $X$ of $U$ of size at least $k$) such that
for each answer set $M_2$ of $P_2'=P_2\cup fix_{P_1}(M_1)$
(that is, for each subset $S$ of $X$),
there exists an answer set $M_3$ of $P_3'=P_3\cup fix_{P_2'}(M_2)$ 
(that is, an element $S_i$ of $\mathcal{C}$),
such that $C \cup fix_{P_3'}(M_3)$ is coherent (that is, $S_i\cap X$ is equal to $S$).

\begin{theorem}
Let $\mathcal{I}=\langle U, \mathcal{C},k\rangle$ be an instance of the VC dimension problem.
Then,  $VC(\mathcal{C})\geq k$ if and only if the \QASP program 
$\Pi$ defined as above has a quantified answer set.
\end{theorem}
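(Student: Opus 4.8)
The plan is to unwind the three quantifiers of $\Pi$ one layer at a time, following the inductive definition of coherence, and to match the answer sets appearing at each layer with the combinatorial objects occurring in the definition of $VC(\mathcal{C})$. Since $\Pi$ is existential, it has a quantified answer set iff it is coherent, so it suffices to prove that $\Pi$ is coherent exactly when $VC(\mathcal{C})\ge k$, i.e.\ exactly when some $X\subseteq U$ with $|X|\ge k$ is \emph{shattered} by $\mathcal{C}$.

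First I would analyze $P_1$. Its only non-fact rule is the cardinality choice rule $k\{inX(X):inU(X)\}$, so the answer sets of $P_1$ are precisely the interpretations $\{inU(x)\mid x\in U\}\cup\{inX(x)\mid x\in X\}$ with $X\subseteq U$ and $|X|\ge k$; I identify such an $M_1$ with the set $X$. Fixing $M_1$, the program $P_2'=P_2\cup fix_{P_1}(M_1)$ pins $inU$ and $inX$ to their values in $M_1$ (through the facts and the complementary constraints that $fix_{P_1}$ introduces) and leaves the single choice rule $\{inS(X):inX(X)\}$ unconstrained; hence $AS(P_2')$ consists exactly of the interpretations obtained from $M_1$ by adding $\{inS(x)\mid x\in S\}$ for an arbitrary $S\subseteq X$, and I identify such an $M_2$ with $S$. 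Next, $P_3'=P_3\cup fix_{P_2'}(M_2)=P_{\mathcal{C}}\cup fix_{P_2'}(M_2)$; since $P_{\mathcal{C}}$ shares no predicate with $P_1$ and $P_2$, a splitting-set argument shows that $AS(P_3')$ consists exactly of the interpretations $M_2\cup N$ with $N\in AS(P_{\mathcal{C}})$, so the extension of $true$ in an answer set of $P_3'$ ranges precisely over the members $S_i$ of $\mathcal{C}$. Finally, with $inX$, $inS$, and $true$ all pinned by $fix_{P_3'}(M_3)$, the stratified rule of $C$ derives $inIntersection$ as $S_i\cap X$ (where $S_i$ is the extension of $true$ in $M_3$), and the two constraints of $C$ are jointly satisfiable iff $S_i\cap X=S$; thus $C\cup fix_{P_3'}(M_3)$ is coherent iff $S_i\cap X=S$.

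Chaining these equivalences through the definition of coherence yields: $\Pi$ is coherent iff there exists $X\subseteq U$ with $|X|\ge k$ such that for every $S\subseteq X$ there is some $S_i\in\mathcal{C}$ with $S_i\cap X=S$ --- that is, iff some $X$ of size at least $k$ is shattered by $\mathcal{C}$, which is exactly $VC(\mathcal{C})\ge k$. Together with the observation that $\Pi$, being existential, is coherent iff $QAS(\Pi)\neq\emptyset$, this gives the theorem.

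The step I expect to be the main obstacle is the bookkeeping around the $fix$ operator at the inner levels --- in particular, arguing rigorously that appending $fix_P(I)$ (facts together with constraints killing the complementary atoms) to a program whose remaining rules touch only a disjoint or already-pinned portion of the Herbrand base produces exactly the ``expected'' answer sets; for $P_3'$ this is essentially a modularity/splitting-set lemma applied to the vocabulary-disjoint program $P_{\mathcal{C}}$. A secondary point worth an explicit check is that the cardinality and plain choice rules in $P_1$ and $P_2$ behave under the ASPCore semantics exactly as their informal reading suggests. Once these two points are in place, the remaining equivalences are a routine unfolding of the semantics.
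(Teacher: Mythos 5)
Your proof is correct and takes essentially the same route as the paper, which justifies the theorem only by the informal layer-by-layer reading given just before its statement: answer sets of $P_1$ are the sets $X\subseteq U$ with $|X|\ge k$, answer sets of $P_2'$ are the subsets $S\subseteq X$, answer sets of $P_3'$ deliver the members $S_i$ of $\mathcal{C}$ via $true$, and coherence of $C\cup fix_{P_3'}(M_3)$ holds exactly when $S_i\cap X=S$. Your explicit treatment of the $fix$ bookkeeping and the splitting argument for the vocabulary-disjoint $P_{\mathcal{C}}$ supplies precisely the detail the paper leaves implicit.
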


\section{Related Work and Discussion}

We now compare \QASP with related work discussing pros and cons of 
the various approaches.

\paragraph{\bf \QASP vs QBF.}
We first compare our proposal with Quantified Boolean Formulas (QBF)~\cite{DBLP:series/faia/2009-185}.
QBF is a natural extension of propositional formulas with quantifiers $\exists$
(existential) and $\forall$ (universal) operating on propositional variables. QFB was motivated by questions arising from computational complexity~\cite{DBLP:conf/stoc/StockmeyerM73}. The problem of checking the satisfiability of a propositional formula (SAT) is the canonical problem for the complexity class NP. 
The addition of quantifiers increases the complexity of satisfiability problem 
(QSAT) to PSPACE~\cite{DBLP:journals/tcs/Stockmeyer76}, and prefixes of $k$
alternating quantifiers yield problems that are complete for each complexity 
class of the Polynomial Hierarchy.  For this reason the satisfiability problem 
of QBF formulas with prefixes of alternating $k$ quantifiers ($k$-$QSAT$ 
becomes the canonical problem for the $k$-th level of the Polynomial Hierarchy).
More precisely, $k$-$QSAT$ restricted to prefixes of length $k$ starting with an existential 
(resp.  universal) quantifier is complete for $\Sigma_k^P$ (resp. $\Pi_k^P$).
\QASP and QBF share the same motivation and intuition, indeed \QASP extends ASP with quantifiers (as QBF extends SAT) to increase the modeling capabilities of the language beyond NP. 
As studied in Section~\ref{sec:complexity}, propositional \QASP and QBF have 
similar computational properties. In particular, the coherence problem for 
both is PSPACE-complete and an even tighter correspondence holds between  
propositional normal \QASP and QSAT. 
Nonetheless, there are important differences among the two languages, some 
inherited form the relation between SAT and ASP, and other concerning the 
semantics of quantifiers.

First, \QASP supports variables, which gives a modeling advantage, and supports
rapid prototyping, program optimization and maintenance of problem solution.
Indeed, variables allow one to encode uniform compact representation of 
a problem over varying instances, while in QBF (as in SAT) each instance of 
a problem needs to be encoded in a specific formula by means of an encoding
procedure. 
Second, even if in general QBF and \QASP can solve the same computational 
problems, \QASP inherits from ASP the possibility of encoding \textit{inductive 
definitions}~\cite{DBLP:conf/kr/DeneckerV14}, which are useful in modeling properties such as 
reachability in graphs (inductive definitions require larger instances in SAT
and QBF that slow down modeling and solving).
Next, ASP supports modeling extensions such as aggregates, choice rules, strong negation,
and disjunction in rule heads that significantly simplify encodings used 
in SAT~\cite{DBLP:journals/cacm/BrewkaET11}. We have made extensive use of
inductive definitions and aggregates in our examples in Section~\ref{sec:modeling}.
Finally, we note that in QBF quantifiers range over variable assignments, 
whereas in \QASP they quantify over the answer sets of each subprogram. 
This is yet another difference and a reason that ASP(Q) cannot be seen as a 
straightforward porting of the ideas behind QBF.

\paragraph{\bf \QASP vs ASP.} 
One of the distinguishing features of ASP is the capability of modeling 
problems in $\Sigma_2^P$. This is possible because of the additional 
expressive power provided by disjunctive rules. 
Modeling in $\Sigma_2^P$ problems with ASP is rather natural if one can use 
only \emph{positive} rules. For example, let us consider the \textit{strategic 
companies} problem~\cite{DBLP:journals/tkde/CadoliEG97}. In that problem, 
one has to compute a set of companies that cover the production of a set of 
goods also controlling other companies. A set of companies $S$ is said to be 
strategic if it: ($i$) covers the productions of all goods; ($ii$) is 
subset-minimal; and, ($iii$) every company $c$ controlled by at most three 
strategic companies is also strategic. In the setting in which each product 
is produced by at most two companies the problem is $\Sigma_2^P$-complete and 
can be modeled as follows~\cite{DBLP:journals/tocl/LeonePFEGPS06}: 

\begin{center}
$\begin{array}{rcl}
strat(Y) \vee strat(X) &\leftarrow& prod\_by(P,X,Y)\\ 
strat(W) &\leftarrow& contr\_by(W,X,Y,Z), strat(X), strat(Y), strat(Z)\\ 
\end{array}$
\end{center}
The first rule models condition $(i)$, the second rule models condition 
($iii$), and the minimality of answer sets ensures ($ii$). It is clear 
that this encoding of the problem can be directly translated to a 
single-quantifier disjunctive \QASP. 

When problem constraints to be modeled involve negation, ASP modeling becomes 
less intuitive. In particular one has to resort to an encoding technique called 
\emph{saturation}~\cite{DBLP:journals/amai/EiterG95}. It allows one to simulate 
a co-NP check in the program reduct. Saturation is at the basis of the 
celebrated encoding of 2-QBF by Eiter and Gottlob~\citeyear{DBLP:journals/amai/EiterG95} used to 
prove the complexity of checking existence of answer sets in presence of 
disjunction in rule heads. Given a 2-QBF formula $\Phi=\exists X\forall Y G$, 
where $G=D_1 \lor\ldots \lor D_h$ is a DNF, and
$D_i= L_{i,1} \land\ldots \land L_{i,k_i}$ and $L_{i,j}$ are literals 
over $X\cup Y$, we encode $\Phi$ in an ASP program as follows.
First introduce a fresh atom $sat$ modeling satisfiability, and
a fresh atom $nz$ for every atom $z\in X\cup Y$; and  
set $\sigma(z)=z$ and $\sigma(\neg z)=nz$ for every $z\in X\cup Y$.
Then write the program 
$P_\Phi = \{ z\vee nz | \forall z\in X\cup Y \} \cup \{ y \leftarrow sat | \forall y\in Y \}$$ \cup \{ ny \leftarrow sat | \forall y\in Y \} \cup \{ sat \leftarrow \sigma(L_{i,1}),\ldots,\sigma(L_{i,k_i}) | i=1,\ldots,m \} \cup \{ sat \leftarrow not\ sat \}$.


Here the atoms corresponding to universally quantified variables $Y$ are 
``saturated'' (i.e., they are forced to be true in any answer set), and 
since the last rule is always removed while computing the reduct, $sat$ 
must be derived for all assignments of truth values to $Y$ to have an answer 
set. This trick ensures that $\Phi$ is satisfiable if and only if $P_\Phi$ 
has an answers set. Again, one could reformulate the program above into
a \textit{disjunctive} program with a single quantifier. 
However, using saturation in modeling is considered difficult. 
\QASP offers an alternative and more intuitive approach, 
It uses \emph{normal} quantified programs with \emph{two} quantifiers that 
also capture $\Sigma_2^P$ (see Theorem~\ref{th:norm}). Indeed, 
let us consider a normal quantified program $\Pi_\Phi=\exists^{st} P_1 \forall^{st} P_2: C$ where 
$$P_1 = \{ \{x_1,\dots,x_n\} \},\quad P_2 = \{ \{y_1,\dots,y_m\} \},$$\vspace*{-0.6cm}
$$C = \{ sat \leftarrow \sigma(L_{i,1}),\ldots,\sigma(L_{i,k_i}) \mid \forall i=1,\ldots,m\} \cup \{ \leftarrow \textcolor{black}{not}\ sat \}.$$
Here, a satisfiability of an existential 2-QBF is encoded directly.
Indeed $P_1$ guesses an 
assignment to $X$ s.t. for all assignments to $Y$ generated by $P_2$, 
$sat$ must be derived by satisfying at least one conjunct in $\varphi$, i.e., 
$\Pi_\Phi$ is satisfiable iff $\Phi$ is. 
This discussion suggests that \QASP improves on ASP modeling capabilities. 
It keeps the advantages of ASP in modeling concisely $\Sigma_2^P$
problems with positive programs, as for strategic companies, but also allows 
us to model other problems without resorting to difficult to use encoding techniques.

\paragraph{\bf \QASP vs Stable-Unstable.} 
To handle problems beyond NP, Bogaerts et 
al.~\citeyear{DBLP:journals/tplp/BogaertsJT16} proposed an extension of ASP
inspired by an internal working principle of ASP 
solvers~\cite{DBLP:conf/ijcai/GebserLMPRS18}. 
Usually, in ASP solvers designed for problems in $\Sigma_2^P$ 
one procedure generates model candidates and another one, acting as an oracle, 
tests minimality of the candidates produced by the first procedure. 
It does so by verifying that a certain subprogram (in some cases, a SAT 
formula) has no stable models (is not satisfiable). 
Following this principle, Bogaerts et al. \citeyear{DBLP:conf/ijcai/GebserLMPRS18} 
introduced \textit{combined logic programs}, in which two normal logic programs play a role 
analogous to the one of the two procedures of ASP solvers mentioned above.
A combined logic program is a pair $\Pi=(P_g , P_t)$ of normal logic programs. 
Its semantics is given by parameterized stable models~\cite{DBLP:conf/ecai/OikarinenJ06,DBLP:conf/iclp/DeneckerLTV12}; a \textit{stable-unstable model} of a combined program $\Pi$ 
is a parameterized stable model of $P_g$, say  $I$, such that 
no parameterized stable model of $P_t$ exists that coincides with $I$ in the 
intersection of the signatures of the two programs.

Comparing \QASP programs with combined programs, we first 
note that combined programs involve the concept of parameters. In applications, 
the parameters of the generator program are used to represent problem 
instances (are ``extensional''). This use of parameters is quite natural to 
ASP programmers and does not pose a conceptual difficulty. It is also used 
implicitly in \QASP (stable models from each quantifier are passed on as 
``input'' parameters to the next one).\footnote{We could also distinguish
extensional predicates to specify ``parameters,'' that is, input instances, 
That would allow us to keep instance specification separate from the program.
We decided not to do so here to simplify our presentation.}
However, the stable-unstable approach applies the notion of a parameterized 
stable model also in the checking phase using ``negation,'' that is, 
referring to non-existence of a certain parameterized stable model. This,
arguably, makes the formalism much less direct than \QASP. It is especially 
clear when we move beyond the second level of the PH and 
the non-existence conditions become nested (incidentally, the stable-unstable 
paper contains no examples of modeling such problems).

If we factor out the issue of parameters, and limit ourselves to problems
in $\Sigma_2^P$, combined programs and \QASP are closely related. 
Indeed, in \QASP one has direct means to model ``testing'' 
conditions of the form ``for all stable models (answer sets) of some  program, 
a certain property holds.'' In contrast, combined programs provide direct means 
to model ``testing'' conditions of the form ``there exists \emph{no} stable 
model of some program such that a certain property holds.'' 
Switching between \QASP and combined programs amounts then to simulating 
conditions of one form with conditions of the other and \emph{vice versa}
(effectively, negating constraints in a program).
Such simulations are easy to design with the use of a small number of auxiliary variables (often one such new variable suffices).
Consequently, both formalisms are on par for modeling problems that are 
complete for $\Sigma_2^P$. However, for problem in $\Pi_2^P$, the difference 
between \QASP and combined programs becomes evident. 
As an example, let us consider a 2-QBF formula $\Psi=\forall X\exists Y \psi$, 
where $\psi$ is a 3-CNF formula. This problem can be naturally represented
in \QASP by using the encoding employed in the proof of Theorem~\ref{th:pspace}.
However once we try to encode it using a combined logic program (for well-known complexity reasons) we have either to adopt an exponential encoding, something 
analogous to quantifier expansion in QBF, or we have to use an additional 
nesting of programs (i.e., we are have to push the entire computation in the 
oracle). In both cases, the modeling would not result in a solution as natural 
and direct as the one provided by \QASP. The reason is  that combined programs 
(as well as their generalizations beyond the second level) represent existential
statements. Hence, they model \textit{complements} of $\Pi_2^P$ problems and 
not the problems themselves. In contrast, \QASP can be used for such problems
in a direct way providing representations closely following original problem descriptions (our examples illustrate this).  

A related aspect concerns modeling itself, the process of 
mapping natural language 
specifications to formal expressions, which surfaces when one considers 
problems that require more than one quantifier alternation. It is important 
to note that combined logic programs were extended to deal with problems from 
any level of the PH in~\cite{DBLP:journals/tplp/BogaertsJT16}
by resorting to a recursive definition. This definition forces the programmer 
to think in terms of ``nested oracles'', instead of translating problem 
description directly into a formal expression. Whereas for problems at the 
second level of the polynomial hierarchy it roughly corresponds to searching 
for a counterexample, for problems at higher levels, the recursion and the 
negation (needed because of the absence of direct means to represent universal 
statements), makes it harder to maintain the connection between problem 
description and oracles forming nested combined programs. 
In contrast, the interface between natural language problem description 
and \QASP programs is transparent (in the same way as it is for QBF), as 
it is explicitly supported by the quantifiers, which may be existential or 
universal, as needed. In particular, the difficulty of modeling problems 
in $\Pi_2^P$, noted above, appears in the general setting of 
problems in $\Pi_k^P$, for $k\geq 2$: the stable-unstable formalism is 
not designed to directly express universal statements that characterize
problems in $\Pi_k^P$.

\textcolor{black}{The discussion above compares at an intutive informal level 
the modeling freatures of the two formalisms. It also suggests how the two are
formally related. In the statement specifying the relation, the \emph{depth} 
of the basic combined program is defined as 2. Each next level of nesting 
increments the depth by 1. } 

\textcolor{black}{
\begin{theorem}
(i) There is a polynomial-time reduction that assignes to every propositional 
nested combined program $\Pi$ of depth $n$, a normal existential \QASP program 
$\Pi_q$ with $n\geq 2$ quantifiers such that answer sets of $\Pi$ and $\Pi_q$, 
correspond to each other.\\
(ii) There is a polynomial-time reduction that assignes to every propositional
normal existential \QASP program $\Pi$ with $n\geq 2$ quantifiers in the 
prefix, a propositional nested combined program $\Pi_c$ of depth $n$ such that
answer sets of $\Pi$ and $\Pi_c$ correspond to each other.
\end{theorem}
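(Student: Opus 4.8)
The plan is to prove both parts by exhibiting explicit, structure‑preserving translations and verifying correctness by induction on $n$, with the cases $n$ even and $n$ odd treated separately. The common starting point is to unfold the recursive semantics of a depth‑$n$ nested combined program $\Pi=(P_1,(P_2,(\ldots,(P_{n-1},P_n)\ldots)))$, all $P_i$ normal: expanding ``$I$ is a parameterized stable model of the generator and there is \emph{no} compatible parameterized stable model of the tester'' and pushing the negations inward through the nesting, one obtains that $\Pi$ has a stable‑unstable model iff
\[
\exists M_1\in AS(P_1)\ \forall M_2\in AS(P_2^{M_1})\ \exists M_3\in AS(P_3^{M_2})\ \cdots,
\]
where $P_i^{M_{i-1}}$ is $P_i$ parameterized by the restriction of $M_{i-1}$ to the shared vocabulary, the quantifiers alternate starting with $\exists$, and the innermost condition is ``$AS(P_n^{M_{n-1}})=\emptyset$'' when $n$ is even and ``$AS(P_n^{M_{n-1}})\neq\emptyset$'' when $n$ is odd. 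The observation that makes the correspondence tight is that a parameterized stable model of $P_i$ with input $J$ is exactly a stable model of $P_i\cup fix_{P_i}(J')$ for any extension $J'$ of $J$ outside the shared vocabulary: the parameter passing of combined programs and the $fix$‑passing of \QASP coincide, the atoms of $P_i$ not occurring in the adjacent program contributing only harmless dangling facts and vacuous constraints.

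For part (i) I would take $\Pi_q=\exists^{st}P_1\ \forall^{st}P_2\ \exists^{st}P_3\cdots\Box_nP_n:C$ with strictly alternating quantifiers, $C=\{f;\ \leftarrow f\}$ with $f$ fresh when $n$ is even, and $C=\emptyset$ when $n$ is odd. Both choices are stratified normal, so $\Pi_q$ is a normal existential \QASP program, and the reduction is polynomial since the $P_i$ are copied verbatim. With this $C$, ``$\forall^{st}P_n:C$ is coherent'' is equivalent to ``$AS(P_n)=\emptyset$'' and ``$\exists^{st}P_n:C$ is coherent'' is equivalent to ``$AS(P_n)\neq\emptyset$'', matching the two innermost cases above. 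A routine induction on $n$ aligning the \QASP coherence recursion with the unfolded quantifier chain (using the parameter/$fix$ identity) then shows that the stable‑unstable models of $\Pi$ are exactly the quantified answer sets of $\Pi_q$; since $P_1$ is unchanged, this correspondence is the identity on $B_{P_1}$.

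For part (ii) I start from $\Pi=\Box_1P_1\cdots\Box_nP_n:C$ with $\Box_1=\exists^{st}$. First I absorb $C$: since $C$ is stratified normal, coherence of $C\cup fix(M)$ is polynomial‑time decidable, so I replace $P_n$ by a normal program $P_n'$ that adds to $P_n$ a deterministic stratified simulation of the $C$‑computation plus one constraint firing iff that computation fails; depending on whether $\Box_n$ is $\exists^{st}$ or $\forall^{st}$, ``$AS(P_n')\neq\emptyset$'' (resp.\ ``$AS(P_n')=\emptyset$'') then encodes ``$\exists M_n:C[M_n]$ coherent'' (resp.\ ``$\forall M_n:C[M_n]$ coherent''). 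Second I normalize the prefix to strictly alternating form: two consecutive quantifiers of the same kind, say $\exists^{st}P_i\ \exists^{st}P_{i+1}$, are merged into a single $\exists^{st}(P_i\oplus P_{i+1})$ by module composition for normal programs (after renaming head occurrences so that the output signatures become disjoint, which is exactly what reproduces \QASP's $fix$‑clamping by a module join); if this collapses the prefix to $n'<n$ alternating quantifiers, I pad it back to $n$ with trivial levels, i.e.\ programs with a unique stable model, rendering the extra $\forall$/$\exists$ vacuous, the precise padding depending on parity. Third I read the resulting alternating program level by level as the chain $(P_1,(P_2,(\ldots,P_n)\ldots))$: a prefix segment ``$\exists M_i\ \forall M_{i+1}\ \exists M_{i+2}\cdots$'' is precisely the ``there is no compatible model of the tester'' condition that, by the unfolding above, defines a nested combined program. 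A second induction on $n$ gives that answer sets correspond, now up to projecting away the renaming/auxiliary atoms.

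The step I expect to be the main obstacle is the vocabulary and parameterization bookkeeping in the second step of part (ii). \QASP's $fix$ clamps \emph{every} atom of the preceding program, including atoms the current program re‑derives in rule heads; reproducing this by module composition requires renaming those head occurrences, adding bridging rules and synchronizing constraints, and then checking (a) that the renaming creates no positive recursion across module boundaries, so the module theorem applies, and (b) that after projecting away the auxiliary atoms the answer‑set correspondence is an honest bijection. A smaller but real subtlety is getting the parity arguments right across all three constructions ($C$, the folded $P_n'$, and the padding) so that the innermost ``coherent/incoherent'' test lines up with ``$AS(P_n)\neq\emptyset$''/``$=\emptyset$''; I would handle this by carrying $n$ even and $n$ odd as separate base cases throughout.
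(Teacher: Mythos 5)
The paper states this theorem without any proof in the body, and the appendix of the source you were given is empty, so there is no argument of the authors' to compare yours against; I can only assess your proposal on its own terms. Your overall architecture is the natural one and most of it is sound: the unfolding of nested stable--unstable semantics into an alternating $\exists/\forall$ chain whose innermost test is ``$AS(P_n)=\emptyset$'' for $n$ even and ``$AS(P_n)\neq\emptyset$'' for $n$ odd is correct, and in part (i) the choice of $C$ as an inherently incoherent program (for even $n$) versus $C=\emptyset$ (for odd $n$) does realize exactly those two tests under the paper's definition of coherence of $\forall^{st}P:C$ and $\exists^{st}P:C$. The identification of parameter passing with $fix$-passing is also essentially right under the usual module-based definition in which input atoms do not occur in rule heads, though you should note that $fix$ accumulates: $fix_{P_{i-1}'}(M_{i-1})$ clamps the atoms of \emph{all} earlier programs, not only those of $P_{i-1}$, so the parameter declarations of the nested combined program must be set up to propagate shared atoms across non-adjacent levels.

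Two points need repair. First, a concrete error in the $C$-absorption step of part (ii): with a constraint that fires iff the simulated $C$-computation \emph{fails}, $AS(P_n')=\emptyset$ says that \emph{no} $M_n$ makes $C\cup fix(M_n)$ coherent, i.e.\ $\forall M_n:\ C[M_n]$ \emph{incoherent} --- not, as you claim, $\forall M_n:\ C[M_n]$ coherent. For $\Box_n=\forall^{st}$ you must flip the polarity (constraint fires iff the computation \emph{succeeds}); only then does $AS(P_n'')=\emptyset$ express the intended universal statement, and only then does the parity of the innermost emptiness test line up with the depth of the nested combined program. Second, the step you yourself flag as the main obstacle --- merging consecutive like quantifiers by module composition --- is doing real work and is not discharged. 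Because $fix$ adds $\leftarrow a$ for every $a\in B_{P_i}\setminus M_i$, an answer set of $P_{i+1}\cup fix(M_i)$ is killed whenever $P_{i+1}$ derives an atom of $B_{P_i}$ outside $M_i$; a plain union of the two programs does not reproduce this, and the renaming-plus-bridging construction you gesture at must be written out and shown to satisfy the hypotheses of the module theorem (no positive recursion across the join, disjoint output signatures) and to yield a bijection after projection. Until that construction and the parity bookkeeping of the padding are made explicit, part (ii) is a plausible plan rather than a proof.
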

}

\textcolor{black}{
Thus, at the level of expressive power, combined programs of depth $n$ and
existential $\QASP$ programs with $n$ quantifiers are formally equivalent, 
even if from the modeling point of view, as we argued, \QASP programs seem 
to have an advantage. However, unless the polynomial hierarchy collapses, 
no reduction from universal $\QASP$ programs with $n$ quantifiers to 
combined nested programs of depth $n$ is possible.
The following proposition specifies this property for the particular case of the validity of 2-QBFs, which we discussed above.
}
\textcolor{black}{
\begin{proposition}
Unless the polynomial hierarchy collapses, there exists no polynomial reduction that encodes formulas $\Psi=\forall X\exists Y \psi$, where $\psi$ is a 3-CNF formula, as a combined program $P=(P_1,P_2)$, where $P_1$ and $P_2$ are normal logic programs, such that $\Psi$ is valid iff $P$ admits stable unstable models.
\end{proposition}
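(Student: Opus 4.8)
The plan is to prove the contrapositive: if such a polynomial-time reduction existed, the polynomial hierarchy would collapse. The key observation is a complexity mismatch. The problem of deciding validity of formulas $\Psi = \forall X \exists Y\, \psi$ with $\psi$ in 3-CNF is $\Pi_2^P$-complete (this is a classical result of Stockmeyer, and it is exactly the class of formulas used in the hardness proof of Theorem~\ref{th:pspace}). On the other hand, the problem of deciding whether a combined program $P = (P_1, P_2)$ with $P_1, P_2$ normal admits a stable-unstable model is in $\Sigma_2^P$: one can guess a parameterized stable model $I$ of $P_g$ (a polynomial-time-checkable object, since $P_g$ is normal) and then verify, with a single co-NP oracle call, that no parameterized stable model of $P_t$ agrees with $I$ on the shared signature. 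So the first step is to pin down these two complexity facts precisely, citing Stockmeyer~\cite{DBLP:journals/tcs/Stockmeyer76} for $\Pi_2^P$-completeness and the defining semantics of combined programs (via parameterized stable models) for the $\Sigma_2^P$ upper bound.

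The second step is the reduction argument itself. Suppose, for contradiction, that there is a polynomial-time map $\Psi \mapsto P_\Psi = (P_1, P_2)$ of the stated form with $\Psi$ valid iff $P_\Psi$ admits a stable-unstable model. Composing this reduction with the $\Sigma_2^P$ membership algorithm for the stable-unstable problem yields a $\Sigma_2^P$ algorithm for deciding validity of $\forall X \exists Y\, \psi$ formulas. Since that problem is $\Pi_2^P$-complete, we get $\Pi_2^P \subseteq \Sigma_2^P$, which implies $\Sigma_2^P = \Pi_2^P$ and hence the collapse of the polynomial hierarchy to its second level. This contradicts the hypothesis, so no such reduction exists.

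The main obstacle — really the only delicate point — is establishing the $\Sigma_2^P$ membership of the stable-unstable recognition problem cleanly, i.e.\ making sure the definition of parameterized stable model used by Bogaerts et al.\ genuinely gives (a)~a polynomial-time check that a candidate $I$ is a parameterized stable model of the normal program $P_g$, and (b)~an NP (hence co-NP for the negation) check for the existence of a matching parameterized stable model of $P_t$. Both hold because parameterized stable models of a normal program, with the parameters fixed by $I$ to concrete truth values, reduce to ordinary stable models of a normal program, whose existence is an NP question and whose verification for a fixed candidate is polynomial. One should also note in passing that the reduction is required to be polynomial in the \emph{size} of $\Psi$; an exponential "quantifier expansion" encoding is not ruled out by this argument, and indeed the surrounding discussion already acknowledges that such an exponential encoding is the price one pays with combined programs — which is precisely the point the proposition is making formal.
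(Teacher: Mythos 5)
Your proposal is correct and follows essentially the same route the paper intends: the validity problem for $\forall X\exists Y\,\psi$ with $\psi$ in 3-CNF is $\Pi_2^P$-complete (this is case (3), $n=2$, of the Stockmeyer result already invoked in the hardness part of Theorem~\ref{th:norm}), while stable-unstable model existence for a pair of normal programs is in $\Sigma_2^P$ by the guess-and-co-NP-oracle argument you give, so a polynomial reduction would yield $\Pi_2^P\subseteq\Sigma_2^P$ and collapse the hierarchy to the second level. Both complexity facts and the composition step are handled correctly, so nothing further is needed.
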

}\textcolor{black}{
A trivial consequence of Theorem~\ref{th:pspace} is that this limitation is absent from \QASP. 
}

Finally, we note that combined programs under stable-unstable semantics have 
been implemented in a proof of concept prototype~\cite{DBLP:journals/tplp/BogaertsJT16} that can only handle problems at the second level of the polynomial 
hierarchy. 
A similar prototype implementation for \QASP (programs with at most two quantifiers) is possible, too.
However,  devising efficient implementations for either formalism in their full 
generality remains a non-trivial open research problem.

\paragraph{\bf Further related work.} 
The problem of modeling in a natural way $\Sigma_2^P$ problems with ASP was
also addressed by Eiter and Polleres~\citeyear{DBLP:journals/tplp/EiterP06}. 
They model problems combining ``guess'' program $P_{solve}$ and ``check'' 
program $P_{check}$, which are transformed into a single disjunctive ASP 
program such that its answer sets encode the solutions of the original 
problem by means of a polynomial-time transformation. The programs $P_{solve}$ 
and  $P_{check}$ must be HCF and propositional, thus limiting this 
approach to the modeling capabilities of propositional ASP. An idea analogous
to that developed by Eiter and Polleres~\citeyear{DBLP:journals/tplp/EiterP06} 
was also proposed by Redl~\citeyear{DBLP:conf/lpnmr/Redl17a}. Redl's proposal
appears to be conceptually simpler than the earlier one because of the use of 
conditional literals but suffers from the same limitations.
A general technique to reuse existing ASP systems to evaluate problems of 
higher complexity (such as  various forms of qualitative preferences among 
answer sets) was proposed by Gebser et al.~\citeyear{DBLP:journals/tplp/GebserKS11}. 
The idea there was to use a meta program encoding the saturation technique
which, in this way, became transparent to the user. As in the approach by Eiter
and Polleres~\citeyear{DBLP:journals/tplp/EiterP06}, the resulting program is a 
plain ASP program which can be evaluated by a standard ASP system. 
Thus, the approach of Gebser et al.~\citeyear{DBLP:journals/tplp/GebserKS11} 
cannot be used to model problems beyond the second level of the polynomial hierarchy.
Another solution that allows for reasoning within a program over the answer sets of another program, and thus encode reasoning tasks beyond NP, is provided by manifold programs~\cite{DBLP:conf/birthday/FaberW11,Faber200934}. 
In manifold programs the calling and the called program are encoded into a single program using weak constrains. The answer sets of the called program are thus represented within each answer set of the calling program. Also this approach is limited to the second level of the polynomial hierarchy, and might generate large specifications.

HEX-programs are an extension of ASP with external sources such as description 
logic ontologies and Web resources~\cite{DBLP:journals/ai/EiterILST08}. In 
HEX-programs external atoms can exchange information from the logic program 
to eternal theories in terms of predicate extensions and constants.
\textcolor{black}{
Redl~\citeyear{DBLP:conf/lpnmr/Redl17a} studied a way to avoid saturation for modeling $\Sigma_2^P$ problems with HEX-programs.} 
In particular, the author proposes the modeling technique of query answering 
over subprograms. While encoding a problem on the second level of the 
polynomial hierarchy, one has to provide two components. A first program 
$P_{guess}$ modeling the NP part, and a second one $P_{check}$ modeling the 
co-NP check. The first program, $P_{guess}$, is a HEX program that can query 
on the answer sets of the normal ordinary ASP program $P_{check}$ using 
specific external atoms. This modeling approach avoids saturation without 
introducing quantifiers, but this nice modeling behavior is limited to 
$\Sigma_2^P$ problems. Indeed, the focus of query answering over subprograms 
is on overcoming saturation and not on reaching high 
expressibility~\cite{DBLP:conf/lpnmr/Redl17a}.
A recent proposal of an extension of propositional ASP to model planning problems was described in~\cite{Romero2017}. 
The main difference with \QASP is on the nature of quantifiers allowed in the two specifications.
Indeed, the proposal of ~\cite{Romero2017}, mimicking 2QBF, allows quantifiers over propositional atoms, whereas in \QASP quantifiers are over answer sets. 

As a final mention, we observe that the idea of extending the base language 
with quantifiers has been applied also in the neighboring area of Constraint 
Satisfaction Problems (CSP)~\cite{DBLP:reference/fai/RossiBW06}, obtaining 
Quantified CSP (QCSP)~\cite{DBLP:conf/cp/BordeauxM02}. 

\section{Conclusions}
In this paper we approached the modeling of problems beyond NP with ASP programs.
Inspired by the way QBFs extend SAT formulas, we have introduced \QASP, which extends ASP via quantifiers over stable models of programs. 
We have studied the computational properties of the language, provided a number of examples to demonstrate its modeling capabilities, and compared alternative approaches to the same problem.
The analysis provided in the paper suggests that \QASP is able to model uniformly problems in the Polynomial Hierarchy in the same compact and elegant way as ASP models problems in NP. 

The definition of \QASP allows for disjunctive programs, thus all the features 
of the basic language are retained. 
However, by limiting to normal (or HCF) programs (extended with aggregates and 
other useful modeling constructs) in \QASP, one can take advantage of the 
classic generate-define-test modular programming methodology and other 
modeling techniques developed for these best understood classes of programs
to model any problem in the Polynomial Hierarchy.
Indeed, the presence of quantifiers allows one to model complex properties in a direct way, without the need of recasting them in terms of checking the minimality of a model, e.g., using saturation.
The examples provided in the paper, indeed, employ normal programs, and the solutions follow directly from the definition in natural language of the problem at hand. 

The key task for the future is to implement \QASP. In this respect many possible solutions are possible, from encoding \QASP in QBF and resorting to QBF solvers, to evolving ASP solvers to handle quantifiers over stable models.

\section*{Acknowledgements}
The work of the third author has been partially supported by the NSF grant IIS-1707371.
This work has been partially supported by MIUR under PRIN 2017 project n. $2017M9C25L\_001$ (CUP $H24I17000080001$). 

\bibliographystyle{acmtrans}
\bibliography{biblio}

\newpage

\appendix

\setcounter{theorem}{0}

\label{lastpage}
\end{document}